\def\eqref#1{equation~\ref{#1}}
\def\ceil#1{\lceil #1 \rceil}
\def\1{\bm{1}}
\DeclareMathAlphabet{\mathsfit}{\encodingdefault}{\sfdefault}{m}{sl}
\SetMathAlphabet{\mathsfit}{bold}{\encodingdefault}{\sfdefault}{bx}{n}
\DeclareFontShape{T1}{ptm}{m}{scit}{<->ssub * ptm/m/sc}{}
\pgfplotsset{compat=1.17} \pgfplotscreateplotcyclelist{mycolors}{{blue!50},{red!75}, {teal!60}}
\def\RR{\mathbb{R}}
\def\NN{\mathbb{N}}
\def\y{\mathbf{y}}
\DeclareMathOperator{\adv}{adv}
\DeclareMathOperator{\mom}{mom}
\def\eqref#1{(\ref{#1})}
\newcommand{\cbigoplus@}{\mathop{\widehat{\bigoplus}}}
\def\thm@space@setup{%
  \thm@preskip=0.25\baselineskip  % Adjust this value to decrease space before theorems
  \thm@postskip=\thm@preskip     % Adjust this value to decrease space after theorems
}
\def\>{\ensuremath{\rangle}}
\def\<{\ensuremath{\langle}}
\def\vec0{\mathbf{0}}
\newtheorem{theorem}{Theorem}[section]
\newtheorem{theoremm}{Theorem}[subsection]
\newtheorem{remark}{Remark}[section]
\newtheorem{remarkk}{Remark}[subsection]
\newtheorem{Corollaryy}{Corollary}[subsection]
\newtheorem{defn}{Definition}[section]
\newtheorem{example}{Example}[section]
\newtheorem{examplee}{Example}[subsection]
\definecolor{amaranth}{rgb}{0.9, 0.17, 0.31}
\newcommand{\blue}[1]{{\color{blue}#1}}
\newcommand{\VM}[1]{{\color{blue}#1}}
\DeclarePairedDelimiterX{\Iintv}[1]{\llbracket}{\rrbracket}{\iintvargs{#1}}
\NewDocumentCommand{\iintvargs}{>{\SplitArgument{1}{,}}m}
{\iintvargsaux#1} %
\NewDocumentCommand{\iintvargsaux}{mm} {#1\mkern1.5mu,\mkern1.5mu#2}
\DeclareMathOperator{\tighter}{tighter}
\DeclareMathOperator{\cs}{cs}
\DeclareMathOperator{\st}{s.t.}
\DeclareMathOperator{\ub}{ub}
\title{Verifying Properties of Binary Neural  Networks Using Sparse Polynomial Optimization}
\author{
% Jianting Yang\footnotemark[2]\thanks{CNRS@CREATE LTD, 1 Create Way, \#08-01 CREATE Tower, Singapore 138602} \\
Jianting Yang \thanks{All authors contributed equally.}\\ % \thanks{CNRS@CREATE LTD, 1 Create Way, \#08-01 CREATE Tower, Singapore 138602}\\ 
  Academy of Mathematics and Systems Science \thanks{Jianting Yang was mainly affiliated with CNRS@CREATE Singapore during the completion of this paper.} \\
  Chinese Academy of Sciences 
 \\
  \texttt{yangjianting@amss.ac.cn}  
  \And 
   Sre{\'c}ko {\DH}ura{\v{s}}inovi{\'c} \\
   CNRS@CREATE Singapore \\
  CCDS,
  NTU Singapore \\
   %Singapore 639798 %\thanks{CNRS@CREATE LTD, 1 Create Way, \#08-01 CREATE Tower, Singapore 138602} \\
   \texttt{srecko001@e.ntu.edu.sg}  
  \And 
   Jean-Bernard Lasserre \\ 
   LAAS-CNRS \\
 Toulouse School of Economics \\
 \texttt{lasserre@laas.fr}  
  \And 
   Victor Magron \\
     LAAS-CNRS \\
 Université de Toulouse\\ 
  \texttt{vmagron@laas.fr} 
 \And 
   Jun Zhao \\
     CCDS,
  NTU Singapore \\
  \texttt{junzhao@ntu.edu.sg}
 }
\begin{document}

\maketitle
\vspace{-0.35cm}
\begin{abstract}
 This paper explores methods for verifying the properties of Binary Neural Networks (BNNs), focusing on robustness against adversarial attacks. Despite their lower computational and memory needs, BNNs, like their full-precision counterparts, are also sensitive to input perturbations. Established methods for solving this problem are predominantly based on Satisfiability Modulo Theories and Mixed-Integer Linear Programming techniques, which   often face scalability issues.
 We introduce an alternative approach using Semidefinite Programming relaxations derived from sparse Polynomial Optimization. Our approach, compatible with continuous input space, not only mitigates numerical issues associated with floating-point calculations but also enhances verification scalability through the strategic use of tighter first-order semidefinite relaxations. We demonstrate the effectiveness of our method in verifying robustness against both $\|.\|_\infty$ and $\|.\|_2$-based adversarial attacks. 
\end{abstract}

\section{Introduction}
In the evolving landscape of machine learning, Binary Neural Networks (BNNs) have emerged as an intriguing class of neural networks since their introduction in 2015 \citep{CourbariauxBNN_2015,Hubara2016_BNN}. The architectural simplicity and inherent advantages of BNNs, such as reduced memory requirements and lower computational time, have garnered significant attention. As these networks have matured, they have been integrated into a wide range of machine learning applications  
\citep{RastegariXNORNet_2016,SunOBJDetection_2018,XiangSpeech2017, 10.1007/978-3-031-51023-6_4}.  
For instance, BNNs are widely used in edge devices for tasks such as object detection, image recognition, and decision-making in resource-constrained environments, like autonomous delivery drones.

Despite their great approximation capacities, general deep neural networks are sensitive to input perturbations \citep{SzegedyIntriguing2014,Siam-News21}.
BNNs are not any different in this regard. Indeed, quantized or binarized networks do not necessarily preserve the properties satisfied by their real-precision counterparts \citep{ Galloway2018_attackingBNN,GiacobbeQuantization_2020}. BNN-controlled autonomous delivery drones, for example, might encounter adversarial environmental conditions, such as lighting variations, or weather disturbances. Failing to robustly classify objects under these altered environmental conditions could lead to a collision, endangering people or property. 
That is why verifying their robustness is one of the crucial aspects of their design and deployment. 
A classifying network is said to be robust to adversarial attacks if small input perturbations do not cause any misclassifications. 
Formally speaking, the network verification consists of finding a point $\bm{x}$ satisfying 
\if{
\begin{align}
    P(\bm{x}) \VM{= \text{{true}}} \VM{\text{{ and }}} Q(\textbf{BNN}(\bm{x})) \VM{= \text{{true}}},
\end{align}
}\fi
$P(\bm{x}) \wedge Q(\textbf{BNN}(\bm{x}))$, where $P$ is a pre-condition on $\bm{x}$, e.g., stating a valid input perturbation, and $Q$ is a post-condition on $y=\textbf{BNN}(\bm{x})$, e.g., stating an (undesirable) alteration of the highest output score. 
If both properties hold for some $\bm{x}$, the network is not robust.

Compared to much research on the robustness verification of full-precision
networks, 
verification of BNNs is a question yet to be addressed with more attention. 
Moreover, many formal verification frameworks do not exploit the bit-precise semantics of BNNs \citep{GiacobbeQuantization_2020}, and in parallel, those BNN-specific approaches for robustness verification often suffer from limited scalability \citep{Lazarus2022_MILP, Narodytska2020_SAT_Friendly_BNN}. 
Two related methodological axes can be identified. 
Firstly, methods based on Satisfiability Modulo Theories (SAT/SMT), as in \citet{AmirGuy2021_SMT, KaiJia2020_EV,Narodytska2018_V,Narodytska2020_SAT_Friendly_BNN}, encode the BNN verification problem into Boolean formula satisfiability, and leverage modern off-the-shelf solvers to prove robustness or find counterexamples. Secondly,  \citet{Khalil2019_IProp, Lazarus2022_MILP} cast robustness verification as a Mixed Integer Linear Programming (MILP) optimization problem. Both approaches belong to the group of exact verification methods, i.e., they are sound and complete. 
{In this paper, instead of solving the BNN verification problem exactly, we rather examine it through the lens of (Sparse) Polynomial Optimization Problems (POP) \citep{lasserre2015introduction,MagronSpop2023}, that can be approximated with hierarchies of relaxations based on Semidefinite Programming (SDP).
Recent frameworks \citep{chen2020,chen2021,Latorre2020Lipschitz,NEWTON2023} have demonstrated that sparse variants of SDP hierarchies could certify the robustness of full-precision ReLU neural networks, by efficiently providing accurate bounds for the associated optimization problems.} 
    \subsection{Contribution}
    \begin{itemize}[leftmargin=10pt]
    \item We exploit the semi-algebraic nature of the $\operatorname{sign}$ activation function to encode the BNN verification problem as a POP. We then solve the resulting first-order SDP relaxation of this POP to obtain lower bounds that can certify robustness. 
    In addition, our method overcomes floating-point-related numerical issues that  typically compromise the branch and bound process of MILP solvers. 
    To the best of our knowledge, this  is the first SDP-based method for BNN verification.
    \item From the theoretical point of view, we prove that adding tautologies (redundant constraints) to the initial POP encoding leads to first-order SDP relaxations with highly improved accuracy. Knowing that higher-order SDP relaxations quickly become intractable for high-dimensional problems, designing tighter first-order SDP relaxations that exploit the structure of the network is crucial for enhancing the scalability of the method. We show that our bounds can be up to $55\%$ more accurate than those derived from the linear relaxations typically used in traditional MILP algorithms.
    
    \item We demonstrate the effectiveness of our method, compatible with continuous input space, in verifying robustness against both $\|.\|_\infty$ and $\|.\|_2$-based adversarial attacks, the latter being much less studied in the BNN verification literature.  Our experimental results indicate that, for  $\|.\|_\infty$ and $\|.\|_2$ robustness verification problems, our algorithm can provide an average speedup of  $4.5$  and $11.4$ times, respectively. For some severe attacks, the speedup exceeds
a factor of 50.
\end{itemize}

\subsection{Related works}
\textbf{General verification methods:}  Neural network verification has been extensively studied in recent years. Optimization solvers based on works of \citet{alphacrown, betacrown} integrate bound propagation with efficient gradient computations, achieving state-of-the-art performance on multiple benchmarks. Reachability-based verification 
using abstract domain representations has been explored in \citet{NNV}, while an enhanced abstract interpretation approach has been recently implemented in PyRAT \citep{pyrat}. Verification via MILP formulations has been investigated in \citet{Marabou}. Sampling-based approach has been introduced in \citet{10.1145/3510003.3510143}. An overview of established verification techniques is available in \citet{SurveyNNverif}.
\hfill\break\\
\textbf{SDP-based verification methods:} Certifying adversarial robustness using SDP relaxations has been first proposed in \citet{RaghunathanSL18}. 
Tightening of the SDP bounds via linear reformulations and quadratic constraints has been proposed in 
\citet{Fazlyab2020IEEE,batten2021ijcai,lan2022tight}.  
Verification can also be tackled by  computing upper bounds of Lipschitz constants \citep{Fazlyab2019Lipsh,Latorre2020Lipschitz, chen2020}. 
Chordal and correlative sparsity can be exploited \citep{AntonXue2024,NEWTON2023,chen2021} to design more efficient SDP relaxations. 
The first-order dual SDP approach developed in \citet{Dathathri2020} enables efficient verification of high-dimensional models. Note that these approaches have been mainly designed to verify standard full-precision ReLU networks. \hfill\break\\
\textbf{BNN verification:} Binarized weights and activation functions allow for the BNN verification problem to be exactly encoded into a Boolean expression. However, solving the resulting encoding via SAT solvers is usually quite computationally expensive as the number of involved variables grows very fast with the network size \citep{Narodytska2018_V, narodytska2018formal}. Consequently, only small and medium-sized networks can be handled within this framework. In order to improve the scalability,  different architectural and training choices have been proposed: inducing sparse weight matrices, achieving neuron stabilization via input bound propagation, direct search instead of unguided jumps for clause-conflict resolution \citep{Narodytska2020_SAT_Friendly_BNN}, enforcing sparse patterns to increase the number of shared computations, neuron-factoring \citep{Narodytska2020_SAT_Friendly_BNN, Cheng2018_Verif}. 
The SAT solver proposed in \citet{KaiJia2020_EV, KaiJia2020_FPE} is tailored for BNN verification. By efficiently handling reified cardinality constraints and exploiting balanced weight sparsification, this solver could achieve significantly faster verification on large networks.
The SMT-based framework from \citet{AmirGuy2021_SMT} extends the well-known Reluplex \citep{GuyRPX_2017} framework to support binary activation functions. 
{Notice that these works are either incompatible with continuous input data or are restricted to $\|.\|_\infty$ perturbations. In contrast, our approach does not require an extra input quantization step and is compatible with more general perturbation regions, including the one defined by the $\|.\|_2$ norm.}

The sole nature of BNNs makes them convenient for being directly represented via linear inequalities and binary variables, enabling the verification problem to be cast as a MILP. 
These properties are exploited in \citet{Khalil2019_IProp}, where \textit{big-M constraints} model the neuron activations, and a heuristic named \textit{IProp} decomposes the original problem into smaller MILP problems. 
However, the big-$M$ approach is also well-known to be sensitive to the $M$ parameter~\citep{GUROBI-Big-M}.
Improved bounds for each neuron have been proposed by \citet{HanGomez2021_CVX} that lead to tighter relaxations, but only on medium-sized networks. 
This suggests that BNNs might also suffer from a convex relaxation barrier for tight verification \citep{HadiBarrier_2019}. Another MILP-based approach has been discussed in \citet{Lazarus2022_MILP,Lazarus2022_MILPThesis}, where both input and output domain of a property to be verified are restricted to  polytopes.
The set-reachability method from  \citet{Ivashchenko2023_StarR} extends the  \textit{star} set and \textit{Image star} approaches from \citet{Bak2017_Star,DungSTAR_2019,Tran2020_ImageStar}.  
Unlike most other methods, it allows the input space to be continuous. 
{However, our experiments highlight the intrinsic weaknesses of LP relaxations for BNN verification, negatively influencing the bounding step of MILP solving. We argue that SDP bounds would be able to provide significant speedups to these exact solvers, especially for larger networks and more severe attacks. }
%\\
Somewhat related works concern  \textit{quantitative} BNN verification, where one tries to estimate how often a given network satisfies or violates some property.  
In \citet{Baluta2019_QV, Narodytska2019_MC}, quantitative robustness verification  is reduced into a model counting problem over a Conjunctive Normal Form (CNF) expression. SAT-based approaches are derived from either (Ordered) Binary or Sentential Decision Diagrams in \citet{ShiTractRep_2020,Shih2019_Angluin, Zhang2021_BDD4BNN}.
\subsection{Notations and preliminaries}
We use Roman letters to denote scalars, and boldfaced letters to represent vectors and matrices. If $\bm{A}$ is a matrix, then $\bm{A}_{(k,:)}$ denotes its $k$-th row vector, and $\|\bm{A}\|_F$ denotes its Frobenius norm. The entry $j$ of a vector $\bm{x}$ (or $\bm{x}_i$) is denoted by $\bm{x}_j$ (or $\bm{x}_{i,j})$.
The Hadamard product is denoted by $\odot$, i.e., $(\bm{A}\odot\bm{B})_{(i,j)}=\bm{A}_{(i,j)}\bm{B}_{(i,j)}$. For any $p\times q$ matrix $\bm{A}$, $\operatorname{nv}(\bm{A})$ is a $p$-dimensional column vector whose $k$-th coordinate is given by $||\bm{A}_{(k,:)}||_{1}$.\\
The ring of $n$-variate real polynomials (resp. of at most degree $d$)  is denoted by $\mathbb{R}[\bm{x}]$ (resp. $\mathbb{R}[\bm{x}]_d$).
Let $\Sigma[\bm{x}]$ be the cone of multivariate Sum Of Squares (SOS) polynomials   and $\Sigma[\bm{x}]_d := \Sigma[\bm{x}] \cap \mathbb{R}[\bm{x}]_{2d}$. By $\langle \bm{x},\bm{y}\rangle=\sum_{i=1}^nx_iy_i$ we denote the standard inner product of vectors $\bm{x},\bm{y}\in\mathbb{R}^n$.
If $k_1, k_2 \in\mathbb{N}$ with $k_1\leq k_2$, then $\Iintv{k_1,k_2}:=\{k_1,\dots,k_2\}$. 
Let $\mathbb{B}_{||.||}(\Bar{\bm{x}},\varepsilon)$ be the $||.||$-ball of radius $\varepsilon$ centered at $\Bar{\bm{x}}$. We denote by $\mathbb{S}_+^n$ the set of symmetric positive semidefinite matrices of size $n$. 
%\par\\
We recall that $f\in\mathbb{R}[\bm{x}]_{2d}$ is SOS if and only if a
positive semidefinite matrix $\bm{G}$ %(called \textit{Gram} matrix) 
satisfies $f = \bm{v}_d^\intercal\bm{G}\bm{v}_d$, where $\bm{v}_d:=(1,x_1,\dots,x_n,\dots,x_n^d)^\intercal$ is the vector of monomials of degree at most $d$ with size  $s(d):=\binom{n+d}{d}$. 
\begin{defn}\label{truncations}
   Let $\bm{g}:=(g_j)_{j \in \Iintv{1,m}}$ and $\bm{h}:=(h_k)_{k \in \Iintv{1,l}}$ denote families of polynomial functions. For all $j \in \Iintv{1,m}$, $k \in \Iintv{1,l}$, define  $d_j := \ceil{\deg(g_j)/2}$ and  $\Bar{d}_k :=\deg(h_k)$. Then,
    %the quadratic module (resp. 
    the $d$-truncated quadratic module $\mathcal{Q}_d(\bm{g})$ generated by $\bm{g}$, and the  $d$-truncated ideal $\mathcal{I}_d(\bm{h})$ generated by $\bm{h}$ are
\if{    
    \begin{align}
        \mathcal{Q}(\bm{g}) (\text{resp.} \phantom{0}\mathcal{Q}_d(g)) := \left\{ \sum_{j=0}^{m} \sigma_j g_j
  \sigma_j \in \Sigma[\bm{x}]_{d-d_j}, j \in\Iintv{0,m}\right\}
    \end{align}
}\fi 
\begin{align}
        \mathcal{Q}_d(\bm{g}) &:= \left\{ \sigma_0+\sum_{j=1}^{m} \sigma_j g_j \mid \sigma_0\in \Sigma[\bm{x}]_{d}, 
  \sigma_j \in \Sigma[\bm{x}]_{d-d_j}, j \in\Iintv{1,m}\right\},\\
  \mathcal{I}_d(\bm{h})&:=\left\{\sum_{k=1}^l \psi_k h_k, \psi_k\in\mathbb{R}[\bm{x}]_{2d-\Bar{d}_k}, k\in\Iintv{1,l}\right\}.
\end{align}
\end{defn} 

\section{Main ingredients}\label{Sec:Background}
\subsection{Binary Neural Networks}\label{Sec:BackgroundBNN}
Let $L\geq1$ be the number of hidden layers of a classifying BNN, with layer widths being given by $\bm{n}=(n_0,n_1,\dots,n_L,n_{L+1})^\intercal\in\mathbb{N}^{L+2}$
, where $n_0$ and $n_{L+1}$ are input and output dimensions. 
A feed-forward BNN is a mapping from the input region $\mathcal{R}_{n_0}\subset\mathbb{R}^{n_{0}}$ to the output set $\Iintv{1,n_{L+1}}$ realized via successive compositions of several internal blocks $(\text{\textbf{B}}_i)_{i=1,\dots,L}$ and an output block $\textbf{B}_o$:
\begin{align}
    \begin{split}
        \text{\text{BNN}}:\mathcal{R}_{n_0}&\to \Iintv{1,n_{L+1}}\\
        \bm{x}_0&  \mapsto \text{BNN}(\bm{x}_0):=\text{argmax}\left(\textbf{B}_o(\textbf{B}_L(\dots(\textbf{B}_1(\bm{x}_0))))\right).
    \end{split}
\end{align}
For any $i\in\Iintv{1,L}$, the internal block $\textbf{B}_i$ implements successively three different operations: affine transformation, batch normalization\footnote{Since batch normalization can be understood as another affine transformation, it can be omitted throughout the technical modeling part, without loss of generality.} and point-wise binarization, so that its output vector, denoted by $\bm{x}_{i}$, belongs to $\{-1,1\}^{n_{i}}$. These operations are described by a set of trainable parameters:
\begin{align}
&\left(\bm{W}^{[i+1]},\bm{b}^{[i+1]}\right)_{i\in\Iintv{0,L}} \in \{-1,0,1\}^{ n_{i+1}\times{n}_{i}}\times \mathbb{R}^{n_{i+1}},\\
&\left(\bm{\gamma}^{[i]},\bm{\beta}^{[i]},\bm{\mu}^{[i]},\bm{\sigma}^{2,[i]}\right)_{i\in\Iintv{1,L}}\in\left(\mathbb{R}^{n_{i}}\right)^4.
\end{align}
Consequently, the output of a neuron $j\in\Iintv{1,n_i}$ from the hidden layer $i\in\Iintv{1,L}$ is given by $\bm{x}_{i,j}=\operatorname{sign}\left(\bm{\gamma}^i_j\frac{\left(\left\langle \bm{W}^{[i]}_{(j,:)},\bm{x}_{i-1}\right\rangle+\bm{b}^{[i]}_j-\bm{\mu}^{[i]}_j\right)}{\sqrt{\bm{\sigma}^{2,[i]}_j+\varepsilon}}-\bm{\beta}^i_j\right)$, with small enough $\varepsilon>0$.
The output block $\textbf{B}_o$ applies a softmax transformation to the affinely-transformed outputs of the last hidden layer, i.e., for each $j\in\Iintv{1,n_{L+1}}$, $\bm{x}_{L+1,j}=\frac{\exp(z_j)}{\sum_{k=1}^{n_{L+1}}\exp(z_k)}$, where $z_j=\bm{W}^{[L+1]}_{(j,:)}\bm{x}_{L}+\bm{b}^{[L+1]}_j$.
\if{In BNNs, the numerical benefit of substituting the costly vector-matrix multiplications by fairly cheap 1-bit XNOR-count operations does not come for free. Indeed, the lack of proper gradient information induced by the binary activation function makes them very difficult to train. 
The same holds for adversarial training, 
where effective attacks rely on gradient data. Since training (robust) BNNs is out of scope, 
we refer to \citeT{QinBNNSurvey_2020, YuanACR_2021} for a comprehensive analysis on that topic. }\fi

In BNNs, replacing vector-matrix multiplications with simpler 1-bit XNOR-count operations comes at a cost: the $\operatorname{sign}$ function prevents effective (adversarial) training due to the lack of proper gradient information. However, a promising framework for training BNNs via automatic differentiation was recently introduced in \citet{Aspman_Korpas_Marecek_2024}. For a comprehensive analysis and the latest advancements in training  (robust) BNNs, see \citet{QinBNNSurvey_2020, 
YuanACR_2021}.
\subsection{Problem formulation}
Casting BNN verification as an optimization problem requires using an appropriate representation of the non-linear $\text{sign}(\cdot)$ activation function. 
For any $(a,b)\in\mathbb{R}^2$, we have:
\begin{align}\label{signEncoding}
    a=\text{sign}(b) \implies a^2-1=0,\,\  \text{and} \,\ ab\geq 0.
\end{align}
This motivates the introduction of a sequence of vector-valued functions $(\bm{h}_{i},\bm{g}_{i})_{i\in\Iintv{1,L}} $ such that 
\begin{subnumcases} {\bm{x}_{i}:= \text{sign}\left(\bm{W}^{[i]}\bm{x}_{i-1}+\bm{b}^{[i]}\right)\implies 
 \label{signEncoding_Network}} 
\bm{h}_{i}(\bm{x}_{i}):=\bm{x}_{i}\odot\bm{x}_{i}-\bm{1}=\bm{0},\\
        \bm{g}_{i}(\bm{x}_{i},\bm{x}_{i-1}):=\bm{x}_{i}\odot( \bm{W}^{[i]}\bm{x}_{i-1}+\bm{b}^{[i]}) \geq \bm{0} \label{StandardSign - Inequality},
\end{subnumcases}
where $\bm{b}^{[i]} \in \mathbb{R}^{ n_{i}} $ satisfies $\left|\bm{b}^{[i]}_k\right|< \operatorname{nv}\left(\bm{W}^{[i]}\right)_{k}$ for each $k\in\Iintv{1,n_{i}}$. 
This assumption eliminates the case in which some neurons are either always or 
never activated, since such neurons have no impact on the verification process. 
Furthermore, we suppose that the input perturbation region $\bm{B}\subseteq\mathbb{R}^{n_0}$ can be encoded via positivity conditions on (at most quadratic) polynomials  $\bm{x}_0\mapsto \bm{g}_{\bm{B}}(\bm{x}_0)$. 
For example, $\bm{g}_{\bm{B}}(\bm{x}_0)= (\bm{\varepsilon}+\Bar{\bm{x}} - \bm{x}_0) \odot  (\bm{\varepsilon}-\Bar{\bm{x}} + \bm{x}_0)$ corresponds to $\bm{B}=\mathbb{B}_{||\cdot||_{\infty}}(\Bar{\bm{x}},\varepsilon)$, where  $\Bar{\bm{x}}\in\mathcal{R}_{n_0}$.
Finally, the \textit{standard} form BNN verification problems studied here are:
\begin{subnumcases} {\tau :=
 \label{stdBNNV-prob}} 
\min \limits_{\bm{x}_0,\bm{x}_1,\dots,\bm{x}_L}  f(\bm{x}_0,\bm{x}_1,\dots,\bm{x}_L)\\%\nonumber \text{subject to:}\\
\label{SignCondition1} \text{s.t.} \quad 
\bm{h}_{i}(\bm{x}_{i})=\bm{0},\, i\in\Iintv{1,L},\\
\phantom{\text{s.t.}}\quad\bm{g}_{i}(\bm{x}_{i},\bm{x}_{i-1}) \geq \bm{0},\, i\in\Iintv{1,L},\label{SignCondition2}\\
\phantom{\text{s.t.}}\quad\bm{g}_{\bm{B}}(\bm{x}_{0}) \geq \bm{0} \label{InputRegion},
\end{subnumcases}
where $f$ is either a linear or a quadratic function.

 \begin{remark}[Adversarial attacks]\label{rmk: AdvA} Given some point $\Bar{\bm{x}}\in \mathcal{R}_{n_0}$ whose true label is $\Bar{y}\in\Iintv{1,n_{L+1}}$, we define a $k$-targeted attack to be any allowed perturbation $\bm{x}_0$ satisfying \eqref{SignCondition1}-\eqref{InputRegion} for which the output of the network is $k\neq \Bar{y}$. If the network is robust, then 
\begin{equation}\label{k-target-objf}
f_k^{\adv}(\bm{x}_0,\bm{x}_{1},\dots,\bm{x}_{L}):=\left\langle\bm{W}^{[L+1]}_{(\Bar{y},:)}-\bm{W}^{[L+1]}_{(k,:)},\bm{x}_{L}\right\rangle+\bm{b}^{[L+1]}_{\Bar{y}}-\bm{b}^{[L+1]}_{k}
\end{equation} is always positive. 
Notice that $f_k^{\adv}$ is an affine mapping of the neurons from the last hidden layer. 
Our framework is suitable for verifying properties of \textsc{BNNs} other than adversarial robustness (e.g. certain
properties describing the ACAS-Xu controller from \citet{GuyRPX_2017}, or energy conservation in dynamical systems \citep{qin2018verification}).
Verification against non-targeted attacks could be achieved via a simple objective function modification.
\end{remark}
\subsection{Sparse Polynomial Optimization}\label{sec: POPbasics}
 Notice that \eqref{stdBNNV-prob} is an instance of Quadratically Constrained Quadratic Programming (QCQP), involving $n:=\sum_{i=0}^L n_i$ decision variables and  $m:=2\sum_{i=1}^L n_i+n_{\bm{B}}$ constraints, where $n_{\bm{B}}$ is the number of polynomials (at most quadratic) needed to represent the input region $\bm{B}$. As such, problem \eqref{stdBNNV-prob} is a special case of Polynomial Optimization since one minimizes a polynomial $f$ over a feasible set $S$ defined with finitely many polynomial (in)equality constraints. 
 Here $S:=\{\bm{x} \,\ | \,\ \bm{g}_i(\bm{x})\geq \bm{0},  \bm{h}_i(\bm{x})=\bm{0}, \: \forall i\in\Iintv{1,L}, \:\bm{g}_{\bm{B}}(\bm{x}_0) \geq \bm{0} \}$. 
 Note that an equivalent characterization of the global infimum of $f$ on $S$ is $\tau=\min  \{f(\bm{x})\,\ | \,\ \bm{x}\in S\} = \max\{\lambda\in\mathbb{R}\,\ | \,\ f - \lambda \geq 0 \text{ on } S \}$. 
 This requires one to efficiently handle the set of polynomials that are nonnegative on $S$, which is known to be intractable.  However, in practice, we can rely on its \emph{tractable} inner approximations based on weighted combinations of elements in $\bm{g}:=\{(\bm{g}_i)_{i\in\Iintv{1,L}}, \bm{g}_{\bm{B}}\}$, the weights being SOS polynomials.
 
By \citep[Theorem 4.2]{Lasserre2001}, $\tau^{d}:=\sup\{\lambda\in\mathbb{R} \,\ | \,\ f-\lambda-\sigma\in\mathcal{I}_d(\bm{h}),\sigma \in \mathcal{Q}_d(\bm{g})\}$ defines a hierarchy of \textit{dense} SDP relaxations whose size increases with the relaxation order $d$, and such that 
$\tau^{d}\uparrow \tau$ as $d\to+\infty$. 
Moreover, the convergence towards $\tau$ is generically finite, which means that $\tau^{d} = \tau$ for some $d \in \mathbb{N}$ \citep[Theorem 1.1]{nie2014optimality}.
\begin{example}\label{ToyBNN-exampleIntro}
    Consider a BNN with $L=2$, $(n_0,n_1,n_2,n_3)=(3,2,2,2)$, with
    %depicted in Figure \ref{fig:BNN example}. 
    trainable parameters given by $\left(\bm{W}^{[1]},\bm{W}^{[2]},\bm{W}^{[3]}\right)=\left(\left(\begin{array}{ccc} -1 & 1 & 1\\ -1 & -1 & 1 \end{array}\right),\left(\begin{array}{cc} -1 & - 1\\ -1 & 1\end{array}\right),\left(\begin{array}{cc} -1 & 1\\ -1 & -1\end{array}\right)\right)$, and $ \left(\bm{b}^{[1]},\bm{b}^{[2]},\bm{b}^{[3]}\right)=\left(\left(\begin{array}{c} 1.5\\ 2 \end{array}\right), \left(\begin{array}{c} 1\\ -0.5 \end{array}\right), \left(\begin{array}{c} -2\\ -1 \end{array}\right)\right)$. Suppose $g_{\bm{B}}(\bm{x}_0)=0.2^2-(\bm{x}_0-\Bar{\bm{x}})^\intercal(\bm{x}_0-\Bar{\bm{x}})$, with $\Bar{\bm{x}}=(0,0.5,0)^\intercal$. The network assigns the label $\Bar{y}=2$ since $ \bm{x}_{3,2}=1>\bm{x}_{3,1}=-2$. By Remark \ref{rmk: AdvA}, the affine objective function to minimize becomes $\bm{x}\mapsto f_1^{\adv}(\bm{x})=-(-\bm{x}_{2,1}-\bm{x}_{2,2}+\bm{x}_{2,1}-\bm{x}_{2,2}-1+2)=2\bm{x}_{2,2}-1$. The corresponding dense SDP relaxation of order $d\geq1$ becomes
    \begin{subnumcases} {\tau^{d}=
 \label{QCQPBNN-SDPrelaxation}} 
\sup \limits_{\lambda,\{\bm{G}_{i,j}\}_{i,j=1}^2,\bm{G}_0,\bm{G}_{\bm{B}}} \lambda  \nonumber\\
\st \quad f_1^{\adv}-\lambda -\sigma\in\mathcal{I}_d(\{\bm{h}_1,\bm{h}_2\}), \label{ideal_toy_bnnn}\\
\phantom{\text{s.t.}}\quad\sigma= \bm{v}_d^\intercal\bm{G}_0\bm{v}_d+\sum_{i,j=1}^2\bm{v}_{d-1}^\intercal\bm{G}_{i,j}\bm{v}_{d-1}\bm{g}_i(\cdot)_j+\bm{v}_{d-1}^\intercal\bm{G}_{\bm{B}}\bm{v}_{d-1}g_{\bm{B}},\\
\phantom{\text{s.t.}}\quad\bm{G}_{\bm{B}}, \bm{G}_{i,j} \in \mathbb{S}_+^{s(d-1)}, i,j\in\Iintv{1,2},  \bm{G}_{0} \in \mathbb{S}_+^{s(d)}\label{SDPcond}.
\end{subnumcases}
In addition to SDP conditions from \eqref{SDPcond}, checking for membership in $\mathcal{I}_d(\{\bm{h}_1,\bm{h}_2\})$ in the line  \eqref{ideal_toy_bnnn} boils down to imposing linear conditions on the coefficients of the involved polynomials. Those can be obtained after applying the substitution rules $\bm{h}_i(\bm{x}_{i})=0\iff\bm{x}^2_{i,j}=1$ for $i,j\in\Iintv{1,2}$.
\end{example}

For a fixed relaxation order $d$, 
dense SDP relaxations involve $\mathcal{O}(n^{2d})$ equality constraints, which prevents them from being applied to large-scale problems. 
However, many POP problems, including BNN robustness verification, exhibit important structural sparsity properties, which enables one to build significantly more computationally efficient SDP relaxations \citep{WKKM06,TSSOS2021}. For instance, let us suppose that $\Iintv{1,n}=:I_0=\cup_{k=1}^p I_k$ with $I_k$ not necessarily disjoint. To every subset $I_k$ we can associate a \textit{subset of decision variables} $\bm{x}_{I_k}:=\{x_i, i\in I_k\}$. An instance of the BNN robustness verification problem  of the form \eqref{stdBNNV-prob} exhibits \textit{correlative sparsity} since
\begin{itemize}[leftmargin=*]
    \item There exist $(f_k)_{k\in\Iintv{1,p}}$ such that $f=\sum_{k=1}^p f_k$, with $f_k\in\mathbb{R}[\bm{x}_{I_k}]$, 
    \item  The polynomials $\bm{g}$ can be split into disjoints sets $J_k$, such that $\bm{g}_i(\cdot)_j\in J_k$ if and only if  $\bm{g}_i(\cdot)_j\in\mathbb{R}[\bm{x}_{I_k}]$. Moreover, $\bm{g}_{\bm{B}} \in J_k$ for $k\in\Iintv{1,p}$. 
    Since %for $i\in\Iintv{1,L}, j\in\Iintv{1,n_i}$, 
     $\bm{h}_i(\cdot)_j$ only depends on $\bm{x}_{i,j}$, the overall sparsity structure is induced by inequality constraints that mimic the cascading BNN structure.
\end{itemize}
As in the dense case, 
%one can build 
a hierarchy of correlatively sparse SDP relaxations is given by $\tau^{d}_{\cs}:=\sup\{\lambda\in\mathbb{R}\,\ | \,\ f-\lambda-\sum_{k=1}^p\sigma_k\in \mathcal{I}_d(\bm{h}), \sigma_k\in \mathcal{Q}_d(\{ \bm{g}_i(\cdot)_j\in J_k\}) \}$. 
%\cap\mathbb{R}[\bm{x}_{I_k}]\}$.
Under additional ball constraints \citep[Assumption 3.1]{MagronSpop2023}, one still has $\tau^{d}_{\cs}\uparrow \tau$ as $d\to+\infty$. 
If $\rho:=\max_k |I_k|$, then these sparse relaxations involve $\mathcal{O}(p\rho^{2d})$ equality constraints, yielding a significant improvement when $\rho \ll n$. 
Apart from this computational gain, we will benefit from the fact that the first-order sparse relaxation is not conservative with respect to the dense one, meaning that $\tau^{1}_{\cs} = \tau^{1}$ \cite[Theorem 9.2]{vandenberghe2015chordal}.

\section{Comparison of Linear Programming (LP) and SDP bounds} \label{Sec:Relaxed-LP}
Here, we assume that $f$ is linear, e.g., the function $f_k^{\adv}$ defined in \eqref{k-target-objf}.  
The goal of this section is to compare LP and SDP relaxations of 
%the robustness verification problem and 
the QCQP encoding \eqref{stdBNNV-prob}, when the perturbation region is described by $\bm{B}=\mathbb{B}_{||.||_{\infty}}(\Bar{\bm{x}},\varepsilon)=\{\bm{x} \,\ | \,\ \Bar{\bm{x}}-\bm{\varepsilon}\leq \bm{x}\leq \bm{\varepsilon}+\Bar{\bm{x}}\}$.
Let $n\geq 2,~\bm{w} \in \{-1,1\}^n$,  and $b \in \mathbb{R}$ satisfying $|b|< n$. As pointed out by \citet{AmirGuy2021_SMT}, an equivalent linear approximation of the set 
$\left\{(\bm{x},y)\in [-1,1]^n\times\{-1,1\},y=\operatorname{sign}(\left<\bm{w},\bm{x}\right>+b)\right\}$ is given by 
\begin{align}\label{LPofSIGN}
   \left\{ (\bm{x},y)\in [-1,1]^n\times\{-1,1\}, y  \geq \frac{2\left(\left<\bm{x},\bm{w}\right>+b\right)}{n+b}-1,~ y\leq  \frac{2\left(\left<\bm{x},\bm{w}\right>+b\right)}{n-b}+1\right\}.
\end{align}
Hence, for each layer $i\in \Iintv{1,L}$, we can replace 
%this latter encoding allows us to replace 
the quadratic function $\bm{g}_{i}$  by two linear functions % $\bm{g}^{1}_{i,\text{LIN}}$ and $\bm{g}^{2}_{i,\text{LIN}}$ 
\begin{subnumcases} {
 \label{LPofSign-layers}} 
\bm{g}^{1}_{i,\text{LIN}}(\bm{x}_{i},\bm{x}_{i-1}):=(\operatorname{nv}(\bm{W}^{[i]})+\bm{b}^{[i]})\odot (\bm{x}_{i} + \bm{1}) -  2\left(\bm{W}^{[i]} \bm{x}_{i-1}+\bm{b}^{[i]}\right)
\label{LPofSign-1},\\
\bm{g}^{2}_{i,\text{LIN}}(\bm{x}_{i},\bm{x}_{i-1}):=(\operatorname{nv}(\bm{W}^{[i]})-\bm{b}^{[i]})\odot (\bm{1}-\bm{x}_{i}) + 2\left(\bm{W}^{[i]} \bm{x}_{i-1}+\bm{b}^{[i]}\right)
\label{LPofSign-2},
\end{subnumcases}
Thus, by encoding the 
$\operatorname{sign}(\cdot)$ function as described in \eqref{LPofSign-1}-\eqref{LPofSign-2}, the standard BNN verification problem can be equivalently formulated as an instance of MILP:
\begin{subnumcases} {\tau_{\text{MILP}}:=
 \label{MILPBNN}} 
\min \limits_{\bm{x}_0,\bm{x}_{1},\dots,\bm{x}_{L}}  f(\bm{x}_0,\bm{x}_{1},\dots,\bm{x}_{L}) \nonumber\\
%\nonumber \text{subject to:}\\
\text{s.t.} \quad \bm{g}^{1}_{i,\text{LIN}}(\bm{x}_{i},\bm{x}_{i-1})\geq \bm{0}, ~i\in\Iintv{1,L}, \label{LIN1-MILP}\\
 \phantom{\text{s.t.}}\quad \bm{g}^{2}_{i,\text{LIN}}(\bm{x}_{i},\bm{x}_{i-1})\geq \bm{0}, ~i\in\Iintv{1,L}, \label{LIN2-MILP}\\
\phantom{\text{s.t.}}\quad\bm{g}_{\bm{B}}(\bm{x}_0) \geq \bm{0} \label{constr - PerturbationLP},\\
\phantom{\text{s.t.}}\quad\bm{x}_{i} \in \{-1,1\}^{n_i},\, ~i\in\Iintv{1,L}. \label{MILP binary constrains}
\end{subnumcases} 
By further relaxing the binary constraints in \eqref{MILP binary constrains} via 
$ \bm{g}^{0}_{i,\text{LIN}}:=(\bm{1}-\bm{x}_{i}, \bm{x}_{i}+\bm{1})$, we can derive the corresponding LP relaxation of the MILP problem in \eqref{MILPBNN}\footnote{Replacing the initial sign constraint by $\bm{g}^{1}_{1,\text{LIN}}$ and $\bm{g}^{2}_{1,\text{LIN}}$ provides a valid encoding only if $\bm{x}_0 \in [-1,1]^{n_0}$. However, it is always possible to transform the input perturbation region so that it corresponds to $\mathbb{B}_{||.||\infty}(\bm 0, 1)$.}:
\begin{subnumcases} {\tau_{\text{LP}}:=
 \label{LP-relaxed-BNN}} 
\min \limits_{\bm{x}_0,\bm{x}_{1},\dots,\bm{x}_{L}}  f(\bm{x}_0,\bm{x}_{1},\dots,\bm{x}_{L}) \nonumber\\
%\nonumber \text{subject to:}\\
\text{s.t.} \quad  \bm{g}^{0}_{i,\text{LIN}}(\bm{x}_{i})\geq \bm{0}, i\in\Iintv{1,L}, \label{constr - inputLP}\\
\phantom{\text{s.t.}}\quad\eqref{LIN1-MILP}-\eqref{constr - PerturbationLP} \label{LPcombined}.
\end{subnumcases}
Notice that we always have $\tau_{\text{LP}}\leq \tau_{\text{MILP}}=\tau$.
\begin{remark}[Encoding of $\operatorname{sign}(\cdot)$]
In \citet{Lazarus2022_MILP,Khalil2019_IProp}, the authors have also considered a \textsc{MILP} encoding of the \textsc{BNN} verification problem, based on $l,u \in \mathbb{R}$ such that $l\leq \left<\bm{x},\bm{w}\right>+b\leq u$ and 
\begin{align}\label{signEncodingLazarus}
    y=\operatorname{sign}(\left<\bm{x},\bm{w}\right>+b)\implies \frac{4}{u}(\left<\bm{x},\bm{w}\right>+b)-3\leq y \,\ \text{and}\,\ y \geq -\frac{4}{l}\left(\left<\bm{x},\bm{w}\right>+b\right)+1. 
\end{align}
For $\bm{w} \in \{-1,1\}^n$ and $u=n+b$,  
we conclude that \eqref{LPofSIGN} provides a tighter bound than \eqref{signEncodingLazarus}    because 
\begin{align}
     y-\left(\frac{4}{n+b}\left(\left<\bm{x},\bm{w}\right>+b\right) -3\right)=y  -\left( \frac{2}{n+b}\left(\left<\bm{x},\bm{w}\right>+b\right)-1\right)+ \sum_{k=1}^n\frac{\left(1- w_kx_k\right)^2}{n+b}.
\end{align}
\end{remark}
\begin{theorem}\label{Thm:Hardness_first_order}
For an arbitrary \textsc{BNN} with depth $L\geq 2$, there always exists an affine function $f:\mathbb{R}[\bm{x}_0,\bm{x}_{1},\dots,\bm{x}_{L}]\to\mathbb{R}$ such that $\tau_\textsc{LP}>\tau^{1}=\tau^{1}_{\cs}$.
\end{theorem}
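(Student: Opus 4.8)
\emph{Reduction to a separation statement.} Since the equality $\tau^{1}=\tau^{1}_{\cs}$ is precisely the first-order collapse recalled from \cite[Theorem 9.2]{vandenberghe2015chordal}, the entire content is the strict gap $\tau_{\textsc{LP}}>\tau^{1}$. I would work on the moment (dual) side of the order-$1$ relaxation. Let $\mathcal{M}_1$ be the set of pseudo-moment sequences $y$ of degree $\le 2$ feasible for that relaxation: $M_1(y)\succeq 0$, the normalizations $L_y(x_{i,j}^2)=1$ coming from the $\bm h_i$, and the scalar localizers $L_y(\bm g_i)\ge 0$, $L_y(\bm g_{\bm B})\ge 0$ coming from \eqref{SignCondition2}--\eqref{InputRegion}. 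For any $y\in\mathcal{M}_1$ and any affine $f$, weak duality gives $\tau^{1}\le L_y(f)$. Writing $\pi(y):=(L_y(x_{i,j}))_{i,j}$ for the vector of first moments and $F\subset\mathbb{R}^{n}$ for the compact LP polytope \eqref{LP-relaxed-BNN}, it suffices to exhibit one $y\in\mathcal{M}_1$ with $\hat{\bm x}:=\pi(y)\notin F$: since $F$ is closed and convex, a separating hyperplane produces $\bm c$ with $\langle\bm c,\hat{\bm x}\rangle<\min_{F}\langle\bm c,\cdot\rangle$, and taking $f=\langle\bm c,\cdot\rangle$ yields $\tau^{1}\le L_y(f)=\langle\bm c,\hat{\bm x}\rangle<\min_F\langle\bm c,\cdot\rangle=\tau_{\textsc{LP}}$. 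Thus the theorem reduces to finding a single order-$1$ feasible pseudo-distribution whose mean vector violates an Amir--Guy cut \eqref{LPofSign-1}.

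\emph{The looseness exploited.} At first order the sign of a neuron $(i,j)$ enters only through the \emph{scalar} localizer $L_y\big(x_{i,j}(\langle \bm W^{[i]}_{(j,:)},\bm x_{i-1}\rangle+\bm b^{[i]}_j)\big)\ge 0$, which couples $x_{i,j}$ to its pre-activation only \emph{in expectation}. When $x_{i,j}$ is made uncorrelated with that pre-activation, the localizer reads $0$ and places \emph{no} constraint on the output mean $L_y(x_{i,j})$, whereas the linear cut \eqref{LPofSign-1} rigidly forces the output mean up once the pre-activation is large. I would target a last-hidden-layer neuron $(L,j_0)$, with $\bm w:=\bm W^{[L]}_{(j_0,:)}$, $b:=\bm b^{[L]}_{j_0}$ and $N:=\operatorname{nv}(\bm W^{[L]})_{j_0}=\|\bm w\|_1$; the assumption $|b|<N$ gives $N+b>0$. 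Fix a genuine feasible point $\bm x^{\ast}$ at which this neuron is strictly active, $s^{\ast}:=\langle\bm w,\bm x^{\ast}_{L-1}\rangle+b>0$ (if it is strictly inactive, use the mirror cut \eqref{LPofSign-2}).

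\emph{The construction.} Let $\mu$ be the law of the following independent blocks: a Dirac at $(\bm x^{\ast}_0,\dots,\bm x^{\ast}_{L-2})$ on all layers up to $L-2$; on layer $L-1$ the two-atom law giving mass $\tfrac12$ to $\bm x^{\ast}_{L-1}$ and $\tfrac12$ to $\operatorname{sign}(\bm w)$; and on every layer-$L$ neuron an independent fair $\pm1$ sign, so each layer-$L$ mean is $0$. Its degree-$\le2$ moment sequence $y$ automatically satisfies $M_1(y)\succeq0$ and the $\pm1$ normalizations. Because layers $\le L-2$ are frozen, each layer-$(L-1)$ pre-activation is the \emph{constant} $s^{\ast}_{L-1,k}$, so its localizer collapses to $s^{\ast}_{L-1,k}\,\mathbb{E}_\mu[x_{L-1,k}]=\tfrac12 s^{\ast}_{L-1,k}\big(\operatorname{sign}(s^{\ast}_{L-1,k})+\operatorname{sign}(w_k)\big)$, equal to $|s^{\ast}_{L-1,k}|\ge0$ when the signs agree and to $0$ when they disagree --- valid with \emph{no} reachability demand on the pattern $\operatorname{sign}(\bm w)$. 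Every layer-$L$ localizer vanishes by independence of that neuron's sign, and the input localizer is $\bm g_{\bm B}(\bm x^{\ast}_0)\ge0$. Finally the target's pre-activation mean is boosted to $\mathbb{E}_\mu[\langle\bm w,\bm x_{L-1}\rangle+b]=\tfrac12(s^{\ast}+N+b)$ while its output mean is $0$; substituting into \eqref{LPofSign-1} the required output value is $\tfrac{2\cdot\frac12(s^{\ast}+N+b)}{N+b}-1=\tfrac{s^{\ast}}{N+b}>0$, strictly above the installed $0$. Hence $\hat{\bm x}=\pi(y)\notin F$ and the separation closes. The hypothesis $L\ge2$ is used precisely here: the ``free jump'' of layer $L-1$ to $\operatorname{sign}(\bm w)$ is paid for by the sign localizers of an \emph{internal} $\pm1$ layer, which cancel as above, whereas for $L=1$ that layer would be the continuous input, pinned rigidly by $\bm g_{\bm B}$.

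\emph{Main obstacle.} The separation step is routine convexity; the real work is verifying that \emph{all} localizers survive simultaneously for the relaxed measure $\mu$. The two devices that make this possible --- freezing layers $\le L-2$ so that the layer-$(L-1)$ localizers become the manifestly nonnegative $\tfrac12(|s|+\operatorname{sign}(w)\,s)$, and decorrelating the target output from its now-random pre-activation so that its own localizer reads $0$ --- are exactly what decouples the boosted pre-activation mean from the suppressed output mean, and getting this bookkeeping right with a strict margin (so the separation is genuinely strict) is the delicate part. The one network-dependent point is the existence of a feasible $\bm x^{\ast}$ making some last-hidden neuron strictly active or strictly inactive; this is where non-degeneracy of the final layer together with $|b|<\operatorname{nv}(\bm W^{[L]})$ enters, and it is the only place a pathological network could cause trouble.
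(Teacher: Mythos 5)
Your construction is correct (modulo the degeneracy you yourself flag) and rests on the same core mechanism as the paper's proof: at order one, the sign constraint of a last-hidden-layer neuron enters only through the scalar localizer $L_{\bm y}\bigl(x_{L,j}(\langle\bm w,\bm x_{L-1}\rangle+b)\bigr)\ge 0$, so one can decorrelate $x_{L,j}$ from its pre-activation (installing mean $0$) while pushing the layer-$(L-1)$ first moments toward $\operatorname{sign}(\bm w)$, thereby violating the linear cut \eqref{LPofSign-1}; the hypothesis $L\ge 2$ is used in both arguments for exactly the reason you give. The differences are in the execution, and they are worth recording. The paper takes $f$ to be (a normalization of) the cut itself and exhibits a single explicit moment block over $(1,\bm x_{L-1},\bm x_{L,j})$ in which layer $L-1$ is a Dirac at $\bm w$, giving value $-1$; but it never verifies the localizers attached to the layers below. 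Since that Dirac makes each $x_{L-1,k}$ deterministic, positive semidefiniteness forces the cross-moments $L_{\bm y}(x_{L-1,k}\bm x_{L-2})=w_k L_{\bm y}(\bm x_{L-2})$, so the omitted localizers read $w_k\bigl(\langle\bm W^{[L-1]}_{(k,:)},L_{\bm y}(\bm x_{L-2})\rangle+\bm b^{[L-1]}_k\bigr)\ge 0$ --- a reachability-type condition on the pattern $\bm w$ that can fail (e.g.\ when the input ball pins the signs of the layer-$(L-1)$ pre-activations against $\bm w$). Your half--half mixture of the true forward pass $\bm x^{*}_{L-1}$ with $\operatorname{sign}(\bm w)$, with all lower layers frozen at a genuine feasible point, is precisely what repairs this: every lower localizer becomes $\tfrac12\bigl(|s^{*}_{L-1,k}|+\operatorname{sign}(w_k)s^{*}_{L-1,k}\bigr)\ge 0$ and the top one vanishes by independence, so your pseudo-moment is feasible for the \emph{entire} relaxation, which the paper's sketch does not establish. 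Two small points: (i) your separation detour is dispensable --- since your mean vector violates the explicit cut for neuron $(L,j_0)$ with margin $s^{*}/(N+b)$, you may simply take $f$ to be that cut, as the paper does, and conclude $\tau^{1}\le L_{\bm y}(f)=-s^{*}/(N+b)<0\le\tau_{\textsc{LP}}$ directly, obtaining an explicit $f$; (ii) when $w_k=0$ you should replace $\operatorname{sign}(w_k)$ by $x^{*}_{L-1,k}$ so that the second atom stays in $\{-1,1\}^{n_{L-1}}$ (the same zero-weight issue affects the paper's matrix $\bm w\bm w^{\intercal}$, whose diagonal must equal $\bm 1$). Finally, the caveat you flag --- the need for some feasible point at which some last-hidden neuron is strictly active or strictly inactive --- is a genuine residual assumption not present in the theorem statement; but the paper's own proof implicitly requires the strictly more delicate, and unverified, condition above, so on this point your argument is no weaker than, and in rigor strictly improves on, the published one.
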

%\vspace*{-0.3cm}
\begin{proof}[Sketch of proof]
Consider $f(\bm{x}_0,\bm{x}_{1},\dots,\bm{x}_{L})=\bm{g}_{L,\textsc{LIN}}^1(\bm{x}_{L},\bm{x}_{L-1})_j$ for some  $j\in \Iintv{1,n_L}$. Then Appendix \ref{Thm: hardness - proof} explicitly constructs a feasible solution to the dual of the sparse first-order SDP relaxation of \eqref{stdBNNV-prob}, yielding the value of the objective function equal to $-1$, implying $\tau^{1}_{\cs} = \tau^{1}\leq -1$. 
 Since $f(\bm{x}_0,\bm{x}_{1},\dots,\bm{x}_{L})\geq 0$ is one of the constraints in  \eqref{LP-relaxed-BNN}, we deduce that $\tau_{\textsc{LP}}\geq 0$. 
\end{proof}
Theorem \ref{Thm:Hardness_first_order} states that the lower bound $\tau^{1}$ is generally not competitive. On the other hand, the inequality $\tau^{2}\geq \tau_{\textsc{LP}}$ is derived in Appendix \ref{app: theorem illustration}, but the second-order SDP relaxation remains computationally inefficient for high-dimensional problems. 
\section{Tightening of the first-order SDP relaxation}\label{Sec:tighter-first-order}
The goal of this section is to propose a more accurate first-order SDP relaxation. 
We still assume that $\bm{B}=\mathbb{B}_{||.||_{\infty}}(\Bar{\bm{x}},\varepsilon)$. Firstly, notice that the semi-algebraic representation of the subgradient of the ReLU function derived in \citep[Section 1.3]{chen2020} provides an additional way of exactly encoding the $\operatorname{sign}(\cdot)$ function using quadratic polynomials. 
Consequently, for each $i\in\Iintv{1,L}$, let us replace the constraint defined in \eqref{StandardSign - Inequality} by the following two constraints:
\begin{subnumcases} {
 \label{LPofSign-layers - better}} 
\Tilde{\bm{g}}^{1}_{i}(\bm{x}_{i},\bm{x}_{i-1}):= (\bm{x}_{i} + \bm{1}) \odot  \left(\bm{W}^{[i]} \bm{x}_{i-1}+\bm{b}^{[i]}\right)\geq \bm{0}
\label{LPofSign-1_bis},\\ \Tilde{\bm{g}}^{2}_{i}(\bm{x}_{i},\bm{x}_{i-1}):=(\bm{x}_{i}-\bm{1})\odot\left(\bm{W}^{[i]} \bm{x}_{i-1}+\bm{b}^{[i]}\right)
\geq \bm{0}\label{LPofSign-2_bis}.
\end{subnumcases}
  Furthermore, we include the following two \textit{redundant} quadratic constraints (tautologies):
 \begin{subnumcases} {
 \label{tautologies}} 
\Tilde{\bm{g}}^\text{t1}_{i}(\bm{x}_{i},\bm{x}_{i-1}):= (\bm{x}_{i} + \bm{1}) \odot  \left(\operatorname{nv}\left(\bm{W}^{[i]}\right)-\bm{W}^{[i]} \bm{x}_{i-1} \right)\geq \bm{0}
\label{tautology_1},\\
\Tilde{\bm{g}}^\text{t2}_{i}(\bm{x}_{i},\bm{x}_{i-1}):=(\bm{1}-\bm{x}_{i} ) \odot  \left(\operatorname{nv}\left(\bm{W}^{[i]}\right)+\bm{W}^{[i]} \bm{x}_{i-1}\right)\geq \bm{0}
\label{tautology_2},
\end{subnumcases}
which hold true for $\bm{x}_{i-1}\in\{-1,1\}^{n_{i-1}}$ and $\left|\bm{W}^{[i]} \bm{x}_{i-1}\right| \bm{\leq} \operatorname{nv}\left(\bm{W}^{[i]} \right)$, where $|\cdot|$ should be understood component-wise. 

Hence, we get an alternative POP encoding of the BNN verification problem:
\begin{subnumcases} {\tau_{\tighter}:=
 \label{QCQPBNN-refined}} 
\min \limits_{\bm{x}_0,\bm{x}_{1},\dots,\bm{x}_{L}}  f(\bm{x}_0,\bm{x}_{1},\dots,\bm{x}_{L}) \nonumber\\
%\nonumber \text{subject to:}\\
\text{s.t.} \quad  \bm{h}_{i}(\bm{x}_{i})=\bm{0},\, i\in\Iintv{1,L},\label{tightercst1}\\
\phantom{\text{s.t.}}\quad\Tilde{\bm{g}}^{1}_{i}(\bm{x}_{i},\bm{x}_{i-1}) \geq \bm{0},\, \Tilde{\bm{g}}^\text{t1}_{i}(\bm{x}_{i},\bm{x}_{i-1}) \geq \bm{0},\, i\in\Iintv{1,L},\\
\phantom{\text{s.t.}}\quad\Tilde{\bm{g}}^{2}_{i}(\bm{x}_{i},\bm{x}_{i-1}) \geq \bm{0},\, \Tilde{\bm{g}}^\text{t2}_{i}(\bm{x}_{i},\bm{x}_{i-1}) \geq \bm{0},\,i\in\Iintv{1,L},\\
\phantom{\text{s.t.}}\quad\bm{g}_{\bm{B}}(\bm{x}_0) \geq \bm{0}\label{tightercst6},
\end{subnumcases}
and corresponding dense and sparse hierarchies of SDP relaxations, i.e., $(\tau^{d}_{\tighter})_d$ and $(\tau^{d}_{\tighter,\cs})_d$.
\begin{theorem}\label{Thm:adding-tautology}
For any \textsc{BNN} verification problem, 
%in  standard form, 
$\tau^{1}_{\tighter,\cs}=\tau^{1}_{\tighter}\geq\tau^{1}$. 
If $L\geq 2$, there exists an affine $f$ such that the inequality is strict. We also have $\tau_{\tighter, \cs}^{1}\geq\tau_{\textsc{LP}}$ for any $f$ affine.
\end{theorem}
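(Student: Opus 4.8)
I would first handle $\tau^{1}_{\tighter,\cs}=\tau^{1}_{\tighter}\geq\tau^{1}$. The tightened constraints \eqref{LPofSign-1_bis}--\eqref{tautology_2} each couple only the consecutive-layer variables $(\bm{x}_i,\bm{x}_{i-1})$, so the correlative sparsity graph of \eqref{QCQPBNN-refined} is identical to the (chordal) pattern of \eqref{stdBNNV-prob}; the equality $\tau^{1}_{\tighter,\cs}=\tau^{1}_{\tighter}$ then follows from the non-conservativeness of the first-order sparse relaxation, i.e. \cite[Theorem 9.2]{vandenberghe2015chordal}, exactly as for $\tau^{1}_{\cs}=\tau^{1}$. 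For $\tau^{1}_{\tighter}\geq\tau^{1}$ I would record the pointwise identity $\bm{g}_i=\tfrac12(\Tilde{\bm{g}}^{1}_{i}+\Tilde{\bm{g}}^{2}_{i})$, which exhibits every original generator as a nonnegative combination of tightened generators; hence $\mathcal{Q}_1(\bm{g})\subseteq\mathcal{Q}_1(\Tilde{\bm{g}})$ modulo $\mathcal{I}_1(\bm{h})$, and maximizing over the larger cone can only raise the bound.

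\textbf{Domination of the LP bound.} This is the heart of the matter, and the key are the two \emph{exact} polynomial identities, valid componentwise for each layer $i$ and neuron $j$,
\[
\bm{g}^{1}_{i,\text{LIN}}=\Tilde{\bm{g}}^{2}_{i}+\Tilde{\bm{g}}^{\text{t1}}_{i},\qquad \bm{g}^{2}_{i,\text{LIN}}=\Tilde{\bm{g}}^{1}_{i}+\Tilde{\bm{g}}^{\text{t2}}_{i}.
\]
One verifies these by expanding both sides in terms of $\operatorname{nv}(\bm{W}^{[i]})\pm\bm{b}^{[i]}$ and observing that the stray affine term $2(\bm{W}^{[i]}\bm{x}_{i-1}+\bm{b}^{[i]})$ cancels; it is precisely the tautologies \eqref{tautology_1}--\eqref{tautology_2} that absorb this remainder. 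Thus each linear sign-encoding constraint lies in $\mathcal{Q}_1(\Tilde{\bm{g}})$ with constant (degree-$0$) multipliers. I would then dispose of the remaining LP constraints: each hidden-layer face satisfies $\bm{1}\mp\bm{x}_i\equiv\tfrac12(\bm{1}\mp\bm{x}_i)^{\odot 2}$ modulo $\mathcal{I}_1(\bm{h})$, and each linear face of the input region belongs to $\mathcal{Q}_1(\{\bm{g}_{\bm{B}}\})$ through $\varepsilon\mp(\bm{x}_{0,k}-\Bar{\bm{x}}_k)=\tfrac{1}{2\varepsilon}(\bm{x}_{0,k}-\Bar{\bm{x}}_k\mp\varepsilon)^2+\tfrac{1}{2\varepsilon}\bm{g}_{\bm{B}}(\bm{x}_0)_k$. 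Finally I would invoke LP strong duality to write $f-\tau_{\textsc{LP}}$ as a nonnegative combination of the LP constraint functions (a genuine polynomial identity, the feasible polytope being nonempty and bounded); substituting the memberships above gives $f-\tau_{\textsc{LP}}\in\mathcal{Q}_1(\Tilde{\bm{g}})+\mathcal{I}_1(\bm{h})$, hence $\tau^{1}_{\tighter,\cs}=\tau^{1}_{\tighter}\geq\tau_{\textsc{LP}}$.

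\textbf{Strictness for $L\geq 2$.} Here I would simply reuse the instance built in Theorem~\ref{Thm:Hardness_first_order}: taking $f=\bm{g}^{1}_{L,\text{LIN}}(\bm{x}_L,\bm{x}_{L-1})_j$ yields $\tau^{1}\leq -1$, while this same $f$ appears as the constraint $f\geq 0$ of the LP \eqref{LP-relaxed-BNN}, so $\tau_{\textsc{LP}}\geq 0$. Chaining this with the previous paragraph gives $\tau^{1}_{\tighter}\geq\tau_{\textsc{LP}}\geq 0>-1\geq\tau^{1}$, the desired strict separation.

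I expect the main obstacle to be \emph{finding} the correct pairing in the two identities. The naive grouping $\bm{g}^{1}_{i,\text{LIN}}=\Tilde{\bm{g}}^{1}_{i}+\Tilde{\bm{g}}^{\text{t1}}_{i}-2(\bm{W}^{[i]}\bm{x}_{i-1}+\bm{b}^{[i]})$ leaves an uncontrolled affine term, so it does not certify membership in the module; only the \emph{cross} pairing, matching the tautology of one sign with the product generator of the opposite sign, cancels that term. Isolating this algebraic fact is the crux, and it is exactly the structural reason why adjoining the tautologies (rather than the products alone) tightens the first-order relaxation enough to dominate the LP bound.
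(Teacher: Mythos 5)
Your proof is correct and follows essentially the same route as the paper: the same identity $\bm{g}_i=\tfrac12(\Tilde{\bm{g}}^{1}_{i}+\Tilde{\bm{g}}^{2}_{i})$ for $\tau^{1}_{\tighter}\geq\tau^{1}$, the same cross-pairing identities $\bm{g}^{1}_{i,\text{LIN}}=\Tilde{\bm{g}}^{2}_{i}+\Tilde{\bm{g}}^{\text{t1}}_{i}$ and $\bm{g}^{2}_{i,\text{LIN}}=\Tilde{\bm{g}}^{1}_{i}+\Tilde{\bm{g}}^{\text{t2}}_{i}$ for domination of the LP bound (the paper writes these with harmless positive scalings $1/c^{i}_{\pm,j}$), and the same reuse of the instance from Theorem~\ref{Thm:Hardness_first_order} for strictness. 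If anything, you are slightly more complete than the paper, which leaves implicit the LP strong-duality step and the membership certificates for the box constraints $\bm{g}^{0}_{i,\text{LIN}}$ and the input-region faces.
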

Theorem \ref{Thm:adding-tautology} asserts that adding tautologies \eqref{tautology_1} and \eqref{tautology_2} is crucial, as they allow to generate a larger first-order  quadratic module and consequently enable more accurate SOS decompositions. See Appendix \ref{Proofs of theorems} for the proof and experimental illustration.

\textbf{Illustration for the case \texorpdfstring{$L=2$}{Lg}} (refer to Appendix \ref{Appendix:cliques} for the general case)\textbf{:} Generally, the BNN structure allows us to decompose the problem by considering $n_0+n_2$ subsets of variables, each of size $n_1+1$. Those subsets are given by  $I_k=\{\bm{x}_{1,1}, \dots, \bm{x}_{1,n_1}, \bm{x}_{0,k} \}$ for $k\in\Iintv{1,n_0}$ and  $I_{n_0+k}=\{\bm{x}_{1,1}, \dots, \bm{x}_{1,n_1}, \bm{x}_{2,k} \}$ for $k\in\Iintv{1,n_2}$. 
\footnote{Notice that, unlike in \citet{NEWTON2023}, the presented structure of decision variables subsets is not dependent on the input size $n_0$, which enhances its computational efficiency.}
\begin{figure}[H]
  \centering
  \newcommand{\layersep}{2.9cm}
\centering
\begin{tikzpicture}[shorten >=1pt,->,draw=black!50, node distance=\layersep][!ht]
    \tikzstyle{every pin edge}=[[-], shorten <=1pt]  % Ensure there are no arrows on any connecting lines
    \tikzstyle{neuron}=[circle,fill=black!25,minimum size=4pt,inner sep=0pt]
    \tikzstyle{input neuron}=[neuron, fill=gray!10];
    \tikzstyle{output neuron}=[neuron, fill=gray!60];
    \tikzstyle{hidden neuron}=[neuron, fill=gray!30];
    \tikzstyle{annot} = [text width=2em, text centered]

 %Draw the input layer nodes with labels inside
    \node[input neuron] (I-1) at (0,-1) {$\bm{x}_{0,1}$};
    \node[input neuron] (I-2) at (0,-2) {$\bm{x}_{0,2}$};
    \node[input neuron] (I-3) at (0,-3) {$\bm{x}_{0,3}$};

    % Adjust hidden layers uniformly
    \foreach \name / \y in {1,2}
        \node[hidden neuron] (H1-\name) at (1.5cm,-\name - 0.5) {$\bm{x}_{1,\name}$};
    \foreach \name / \y in {1,2}
        \node[hidden neuron] (H2-\name) at (3cm,-\name - 0.5) {$\bm{x}_{2,\name}$};

    % Output layer nodes
    \foreach \name / \y in {1,2}
        \node[output neuron] (O-\name) at (4.5cm,-\name - 0.5) {$\bm{x}_{3,\name}$};

    % Connect all nodes with simple lines
    % Input layer to first Hidden layer
    \foreach \source in {1,2,3}
        \foreach \dest in {1,2}
            \draw[->] (I-\source) -- (H1-\dest);  % Explicitly define no arrow

    % First Hidden layer to second Hidden layer
    \foreach \source in {1,2}
        \foreach \dest in {1,2}
            \draw[->] (H1-\source) -- (H2-\dest);  % Explicitly define no arrow

    % Second Hidden layer to Output layer
    \foreach \source in {1,2}
        \foreach \dest in {1,2}
            \draw[->] (H2-\source) -- (O-\dest);  % Explicitly define no arrow

     \draw[very thin, red!10] 
    ($(I-1.south west)+(-0.1,-0.18)$) -- 
    ($(I-1.north west)+(-0.1,0.17)$) -- 
    ($(H1-1.north west)+(-0.45,0.67)$) -- 
    ($(H1-1.north east)+(0.1,0.67)$) -- 
    ($(H1-2.south east)+(0.1,-0.35)$) --
    ($(H1-2.south west)+(-0.45,-0.35)$) -- 
    ($(H1-1.south west)+(-0.45,0.31)$) -- 
    cycle;

     \draw[very thin, red!50] 
    ($(I-2.south west)+(-0.1,-0.1)$) -- 
    ($(I-2.north west)+(-0.1,0.15)$) -- 
    ($(H1-1.south west)+(-0.25,0.15)$) -- 
    ($(H1-1.north west)+(-0.25,0.3)$) -- 
    ($(H1-1.north east)+(0.1,0.3)$) -- 
    ($(H1-2.south east)+(0.1,-0.3)$) --
    ($(H1-2.south west)+(-0.25,-0.3)$) -- 
    ($(H1-1.south west)+(-0.25,-0.63)$) -- 
    cycle;

     \draw[very thin, red!100] 
    ($(I-3.north west)+(-0.1, 0.15)$) -- 
    ($(I-3.south west)+(-0.1,-0.2)$) -- 
    ($(H1-2.south west)+(-0.35,-0.7)$) --
    ($(H1-2.south east)+(0.1,-0.7)$) -- 
    ($(H1-1.north east)+(0.1,0.35)$) -- 
    ($(H1-1.north west)+(-0.35,0.35)$) -- 
    ($(H1-2.north west)+(-0.35,-0.35)$) --
    cycle;

       \draw[very thin, blue!40] 
    ($(H1-1.north west)+(-0.15,0.15)$) -- 
    ($(H1-1.north east)+(-0.15,0.15)$) -- 
    ($(H2-1.north east)+(0.1,0.15)$) -- 
    ($(H2-1.south east)+(0.1,-0.1)$) -- 
    ($(H2-1.south west)+(-0.1,-0.1)$) --
    ($(H1-1.south east)+(0.2,-0.1)$) -- 
    ($(H1-2.south east)+(0.2,-0.15)$) -- 
    ($(H1-2.south west)+(-0.15,-0.15)$) -- 
    cycle;

    \draw[very thin, blue!100] 
    ($(H1-1.north west)+(-0.15,0.1)$) -- 
    ($(H1-1.north east)+(0.3,0.1)$) -- 
    ($(H1-2.north east)+(0.3,0.1)$) -- 
    ($(H2-2.north east)+(0.1,0.1)$) -- 
    ($(H2-2.south east)+(0.1,-0.1)$) -- 
    ($(H1-2.south east)+(0.1,-0.1)$) -- 
    ($(H1-2.south west)+(-0.15,-0.1)$) -- 
    cycle;
\end{tikzpicture}
  \caption{A toy BNN with $L=2$ and $(n_0,n_1,n_2,n_3)=(3,2,2,2)$. The subsets of interacting variables $I_1=\{x_{0,1},x_{1,1},x_{1,2}\}, I_2=\{x_{0,2},x_{1,1},x_{1,2}\}, I_3=\{x_{0,3},x_{1,1},x_{1,2}\}$ (represented by red polygons) and $I_4=\{x_{1,1},x_{1,2},x_{2,1}\}, I_5=\{x_{1,1},x_{1,2},x_{2,2}\}$ (represented by blue polygons) are used to compute $\tau^{1}_{\tighter, \cs}$.}
  \label{fig:BNN example}
\end{figure}
For the BNN from Example \ref{ToyBNN-exampleIntro}, depicted in Figure \ref{fig:BNN example}, most SOS multipliers are non-negative reals  when $d=1$. 
With $\Tilde{\bm{g}}_i=\{\Tilde{\bm{g}}^1_i,\Tilde{\bm{g}}^2_i,\Tilde{\bm{g}}^\text{t1}_i,\Tilde{\bm{g}}^\text{t2}_i\}$, the tighter first-order sparse SDP relaxation writes:
\begin{subnumcases} {\tau^{1}_{\tighter, \cs}=
 \label{QCQPBNN-refined-illustrationnn}} 
\sup \limits_{\lambda,\{\bm{\sigma}_{\bm{g}}\}_{\bm{g}\in\Tilde{\bm{g}}_i, i \in \Iintv{1,2}},\sigma_{\bm{B}},\{\bm{G}_k \}_{k=1}^5} \lambda  \nonumber\\
\text{s.t.} \quad f_1^{\adv}-\lambda - \sigma \in\mathcal{I}_1(h), \label{ideal_toy_bnn}\\
\phantom{\text{s.t.}}\quad\sigma(\bm{x})=\sum_{i=1}^2 \sum_{\bm{g}\in\Tilde{\bm{g}}_i}\bm{1}^\intercal(\bm{\sigma}_{\bm{g}} \odot \bm{g}(\bm{x}_{i},\bm{x}_{i-1}))+\sum_{k=1}^5 \sigma_{0,k}(\bm{x}_{I_k})+\sigma_{\bm{B}}g_{\bm{B}}(\bm{x}_0),\\
\phantom{\text{s.t.}}\quad\sigma_{0,k}(\bm{x}_{I_k})=\bm{v}_1(\bm{x}_{I_k})^\intercal \bm{G}_k \bm{v}_1(\bm{x}_{I_k}),\bm{G}_k \in \mathbb{S}_+^{|I_k|+1}, k\in\Iintv{1,5},\\
 \phantom{\text{s.t.}}\quad \bm{\sigma}_{\bm{g}} \geq \bm{0}, \bm{g}\in\Tilde{\bm{g}}_i, i \in \Iintv{1,2}, \quad \quad\sigma_{\bm{B}} \geq 0.
\end{subnumcases}
\section{Numerical experiments}  
\label{sec:benchs}
In this section, we provide numerical results for BNN robustness verification problems with respect to different perturbations. All experiments are run on a desktop with a 12-core  i7-12700  2.10 GHz CPU and 32GB of RAM. 
The tightened first-order sparse SDP relaxation is modeled with TSSOS \citep{Magron2021TSSOSAJ} and solved with Mosek \citep{andersen2000mosek}. 
Gurobi \citep{gurobi} is used to solve MILP. 
BNNs were trained on standard benchmark datasets, using Larq \citet{Geiger2020larq}. 
The full experimental setup is detailed in  Appendix \ref{Appendix:Detailed-results}. 
\subsection{ Robustness against \texorpdfstring{$\|.\|_\infty$ attacks}{Lg}}
MNIST dataset was used to train two sparse networks - $\textsc{BNN}_1$ and $\textsc{BNN}_2$, where sparsity refers to \textit{weights sparsity} defined by $w_s=:\frac{1-\sum_{i=0}^L \|\bm{W}^{[i+1]}\|_F^2}{\sum_{i=0}^L n_{i}n_{i+1}}$. We assess the performance of our method (number of solved cases (cert.) and verification time $t\: (s)$) in verifying  robustness of the first 100 images from the test set. Obtained results, see Table \ref{tab:comparison}, are compared with both LP \eqref{LP-relaxed-BNN} and MILP \eqref{MILPBNN} methods, where the latter was converted into an easier, \textit{attack feasibility problem} by adding $f(\cdot)\leq 0$ to the constraint set. 

Our method is on average $15.75\%$ (for $\textsc{BNN}_1$) and $25.67\%$ (for $\textsc{BNN}_2$)  less conservative than the LP-based method. Moreover, larger input regions do not significantly affect performance, compared to MILP, where the impact on running time is much more severe.
\if{seems to be exponential in $\varepsilon$.}\fi This is partially due to the coarseness of LP bounds, see Figure \ref{fig:epsilon_comparison}, and is even more prominent for densely connected networks. 
\begin{table}[H]
\centering
\caption{Performance comparison for different models and input regions, given by $\delta_{||.||_\infty}=127.5\epsilon$ (data were scaled to $[-1,1]^{784}$). We use $|$ to separate the number of MILP-solver-certified non-robust and robust instances, within a time limitation of $600\:s$. The runtime in parentheses refers to the average runtime over the instances that our method verified successfully.}
\label{tab:comparison}
\begin{tabular}{c|c|cc|cc|cc}
\toprule
\multirow{2}{*}{Model} & \multirow{2}{*}{$\delta_{||.||_\infty}$} & \multicolumn{2}{c|}{$\tau_{\text{LP}}$} & \multicolumn{2}{c|}{$\tau_{\tighter, \cs}^{1}$} & \multicolumn{2}{c}{$\tau_{\text{Soft-MILP}}$} \\
                       &                                          & cert. & $t\:(s)$  & cert. & $t\:(s)$  & cert. & $t\:(s)$  \\
\midrule
\multirow{4}{*}{\shortstack[l]{$\textsc{BNN}_1$: \\ $[784, 500, 500, 10]$ \\ $w_s=34.34\%$}}  & $0.25$ & 83 & 0.01 & 91 & 3.62 (3.58)& 3 | 95 & 0.04 (0.04)\\
                       & $0.50$ & 31 & 0.02 & 60 & 6.69  (6.50)& 4 | 94 & 1.21 (0.06)\\
                       & $1.00$ & 1  & 0.03 & 21 & 10.76 (8.22) & 15 | 50 & 251.90 (1.37) \\
                       & $1.50$ & 0  & 0.06 & 6  & 38.32  (26.99)& 20 | 12 & 428.24 (191.95) \\
\midrule
\multirow{3}{*}{\shortstack[l]{$\textsc{BNN}_2$:\\ $[784, 500, 500, 10]$ \\ $w_s=19.07\%$}}  & $0.25$ & 14 & 0.03 & 59 & 11.97 (10.64)& 3 | 95 & 2.23  (0.69)\\
                       & $0.50$ & 0  & 0.05 & 23 & 42.37  (24.21)& 9 | 63 & 220.53 (13.24) \\
                       & $0.75$ & 0  & 0.08 & 9  & 139.18 (52.01) & 10 | 19 & 455.61 (186.54)\\
\bottomrule
\end{tabular}
\end{table}
\begin{figure}[H]
    \centering
    \caption{Comparing $\tau_\textsc{LP}$ and $\tau^{1}_{\tighter, \cs}$ bounds  for $\textsc{BNN}_1$ and different $\delta_{||.||_\infty}$. Each subplot $x$-axis represents indices of test set images sorted in the descending order of $\tau^1_{\tighter, \cs}$ values. The upper bound $\ub$ is obtained by random sampling. \if{evaluating the objective function at $10000$ randomly sampled perturbations}\fi 
The relative improvement over LP is estimated through $\frac{\tau^{1}_{\tighter,\cs}-\tau_{\textsc{LP}}}{\ub-\tau_{\textsc{LP}}}$. 
On average, $\tau^{1}_{\tighter, \cs}$ bounds are $21, 33, 46$ and $53$ percent more accurate, respectively.} 
    \label{fig:epsilon_comparison}
\begin{minipage}[t]{.45\linewidth}
\begin{tikzpicture}
\begin{axis}[
 legend style={
            at={(0.0,0.0)}, 
            font=\tiny,
            anchor=south west,
            inner sep=0.5pt,   
            outer sep=0.5pt,   
            nodes={scale=0.77, transform shape}  % Keep scaling for entries
        },     height=5.0cm,
ymin=-325, ymax=120,    width=1.2\linewidth, % 
        ylabel={Computed bound},    ytick={-300,-200,-100,0,100},
    xticklabels={},%yticklabels={},    
    ylabel={Computed bound},
    grid=none,
    thick,
    no marks,
    cycle list name=mycolors
]

    \addplot table [x expr=\coordindex, y index=0, col sep=comma] {sorted_filtered_ub025.csv};
    \addlegendentry{$\ub$}

\addplot table [x expr=\coordindex, y index=0, col sep=comma] {opt_sdp_sorted025.csv};
\addlegendentry{$\tau^1_{\tighter, \cs}$}

     \addplot table [x expr=\coordindex, y index=0, col sep=comma] {opt_lp_sorted025.csv};
\addlegendentry{$\tau_\textsc{LP}$}
    
    \addplot[black!95, thick,dashed] coordinates {(0,0) (98,0)};
    \node[font=\tiny, anchor=south west, align=center, draw, fill=white, inner sep=1.5pt] at (rel axis cs:0.28,0) {%
    $\delta_{||.||_\infty}=0.25$
};
    \end{axis}
\end{tikzpicture}
\end{minipage}%
\hfill 
\begin{minipage}[t]{.45\linewidth}
\begin{tikzpicture}
\begin{axis}[
 legend style={
            at={(0.0,0.0)}, 
            anchor=south west, 
            font=\tiny,           
            inner sep=0.5pt,        
            outer sep=0.5pt,        
            nodes={scale=0.77, transform shape} 
        },       height=5.0cm,ymin=-325, ymax=120,
    width=1.2\linewidth, 
    xticklabels={}
        ytick=\empty,  
    ylabel={},     
    yticklabels={} 
    grid=none,
    thick,
    no marks,
    cycle list name=mycolors,
]

 \addplot table [x expr=\coordindex, y index=0, col sep=comma] {sorted_filtered_ub05.csv};
    \addlegendentry{$\ub$}

    \addplot table [x expr=\coordindex, y index=0, col sep=comma] {opt_sdp_sorted05.csv};
\addlegendentry{$\tau^1_{\tighter, \cs}$}

\addplot table [x expr=\coordindex, y index=0, col sep=comma] {opt_lp_sorted05.csv};
\addlegendentry{$\tau_\textsc{LP}$}

    \addplot[black!95, thick,dashed] coordinates {(0,0) (98,0)};
    \node[font=\tiny, anchor=south west, align=center, draw, fill=white, inner sep=1.5pt] at (rel axis cs:0.28,0) {%
    $\delta_{||.||_\infty}=0.5$
};
\end{axis}
\end{tikzpicture}
\end{minipage}%
\hfill
\begin{minipage}[t]{.45\linewidth}
\begin{tikzpicture}
\begin{axis}[
     legend style={
            at={(0.0,0.0)}, 
            anchor=south west, 
            font=\tiny,           % Smaller font size
            inner sep=0.5pt,        % Reduce inner padding
            outer sep=0.5pt,        % Reduce outer margin
            nodes={scale=0.77, transform shape}  % Scale down the legend entries
        },   
    height=5.0cm,ymin=-325, ymax=120,
    width=1.2\linewidth, 
        ylabel={Computed bound},
    xticklabels={},
    ylabel={Computed bound},
    ytick={-300,-200,-100,0,100},
    grid=none,
    thick,
    no marks,
    cycle list name=mycolors
]
 \addplot table [x expr=\coordindex, y index=0, col sep=comma] {sorted_filtered_ub10.csv};
    \addlegendentry{$\ub$}

    \addplot table [x expr=\coordindex, y index=0, col sep=comma] {opt_sdp_sorted1.csv};
\addlegendentry{$\tau^1_{\tighter, \cs}$}

\addplot table [x expr=\coordindex, y index=0, col sep=comma] {opt_lp_sorted1.csv};
\addlegendentry{$\tau_\textsc{LP}$}

    \addplot[black!95, thick,dashed] coordinates {(0,0) (98,0)};
    \node[font=\tiny, anchor=south west, align=center, draw, fill=white, inner sep=1.5pt] at (rel axis cs:0.28,0) {%
    $\delta_{||.||_\infty}=1.0$
};
    \end{axis}
\end{tikzpicture}
\end{minipage}%
\hfill
\begin{minipage}[t]{.45\linewidth}
\begin{tikzpicture}
\begin{axis}[
 legend style={
            at={(0.0,0.0)}, 
            anchor=south west, 
            font=\tiny,           % Smaller font size
            inner sep=0.5pt,        % Reduce inner padding
            outer sep=0.5pt,        % Reduce outer margin
            nodes={scale=0.77, transform shape}  % Scale down the legend entries
        },   
    height=5.0cm,    ymin=-325, ymax=120,
    width=1.2\linewidth, 
    xticklabels={}
        ytick=\empty, 
    ylabel={},
    yticklabels={} 
    clip=false,
    grid=none,
    thick,
    no marks,
    cycle list name=mycolors,
]

\addplot table [x expr=\coordindex, y index=0, col sep=comma] {sorted_filtered_ub15.csv};
    \addlegendentry{$\ub$}

    \addplot table [x expr=\coordindex, y index=0, col sep=comma] {opt_sdp_sorted15.csv};
\addlegendentry{$\tau^1_{\tighter, \cs}$}

 \addplot table [ x expr=\coordindex, y index=0, col sep=comma] {opt_lp_sorted15.csv};
\addlegendentry{$\tau_\textsc{LP}$}

    \addplot[black!95, thick,dashed] coordinates {(0,0) (98,0)};

\node[font=\tiny, anchor=south west, align=center, draw, fill=white, inner sep=1.5pt] at (rel axis cs:0.28,0) {%
    $\delta_{||.||_\infty}=1.5$
};
\end{axis}
\end{tikzpicture}
\end{minipage}%
\end{figure}
Additional experimental results on $\|\cdot\|_\infty$ robustness verification on CIFAR-10 data set are presented in Appendix~\ref{Appendix:Detailed-results},  Table~\ref{Tab:BNN_3}.
\paragraph{SDP methods provide trusted bounds:} The discontinuity of the $\operatorname{sign}(\cdot)$ activation function can exacerbate floating-point errors, leading to significant numerical inaccuracies in the MILP solving process. For instance, if a node's value after linear transformations is $-1.95 \cdot 10^{-14}$, assigning the value of $-1$ to this node is unreliable, as floating-point errors could easily flip the sign. Such incorrect sign values can critically affect the feasibility of the solution and the overall bound. To address this, we introduce a margin of $10^{-7}$ for sign determination, which we call Soft-MILP.
In contrast, for the SDP-based method, we can easily enclose the errors with interval arithmetic as in \citep[Section 2.2]{jfr14}, to obtain a rigorously valid lower bound. 
\subsection{Robustness against \texorpdfstring{$||.||_2$}{Lg} attacks}
We replace the constraint \eqref{constr - PerturbationLP} with $\bm{g}_{\bm{B}}$ such that $\bm{B}=\mathbb{B}_{||.||_{2}}(\Bar{\bm{x}},\varepsilon)\cap[-1,1]^{784}$. The resulting Mixed Integer Non-Linear Programming (MINP) problem, also solved using Gurobi, and its optimal value are referred to as $\tau_{\text{Soft-MINP}}$. 
\begin{table}[H]
\centering
\caption{Performance comparison for $||.||_2$-verification, where $\delta_{||.||_2}=255\varepsilon$.}
\label{tab:true comparison l2}
\begin{tabular}{c|cc|cc}
\toprule
\multirow{2}{*}{$\delta_{||.||_2}$} & \multicolumn{2}{c|}{$\tau^{1}_{\tighter, \cs}$}& \multicolumn{2}{c}{$ \tau_{\text{Soft-MINP}}$ } \\
                          & cert.        &  $t\:(s)$       & cert.       & $t\:(s)$      \\ 
\midrule
\multicolumn{5}{c}{$\textsc{BNN}_1:$ $[784, 500, 500, 10], w_s=34.34\%$}\\
 \midrule
$10$&      70  &5.23 (5.02)&  3  $|$ 93  &  33.35 (5.75)\\
$20$&     36   &  19.54  (15.20)& 4  $|$ 30 &   447.11 (278.38) \\
$30$&     13 &   34.24 (18.08)& 4  $|$ 6\: &   556.07 (467.16)\\
\midrule
\multicolumn{5}{c}{$\textsc{BNN}_2:$ $[784, 500, 500, 10], w_s=19.07\%$} \\% Now 
\midrule
$5$&      81 & 8.57 (8.50)&  2  $|$ 96  &   3.31 (1.18)\\
$10$&      46 & 19.00 (15.44)&  3  $|$ 58  &   272.92 (106.25) \\
$15$&      27 & 63.21 (36.73) &  4  $|$ 23 &   475.78 (293.96)\\
\bottomrule
\end{tabular}
\end{table}
As displayed in Table \ref{tab:true comparison l2}, when the perturbation region is small, the exact \textsc{MINP} method can  verify more instances. However, for more severe attacks, our method certifies robustness for almost the same number of instances, but $11.4$ times faster on average (for more details, refer to Appendix \ref{Appendix:Detailed-results}).
\section{Conclusion and Future Works}\label{sec: concl}
In this work, we studied SDP relaxations associated with polynomial optimization problems to verify
the properties of BNNs with continuous input space. We demonstrated the ability of our method to verify robustness against both $\|.\|_\infty$ and $\|.\|_2$ attacks. The proposed method efficiently exploits the inherent sparse structure of a given BNN and generally provides much less conservative bounds than the LP-based method. Moreover, its running time does not scale exponentially with either the size of the network or the perturbation region. 

Our method relies on the interior-point SDP solvers, and thus inherits their limitations. One direction for future improvements could be based on the embedding of automatic differentiation to either improve existing interior-point SDP solvers or create new ones, like in \citet{Dathathri2020}. 

Moreover, our experimental results suggest that one could replace LP relaxations with SDP relaxations in branch-and-bound/branch-and-cut algorithms, such as the ones implemented in Gurobi. 
Based on our experiments, an alternative SDP-based relaxation embedded within general-purpose MILP/MINP solvers would significantly accelerate the exact verification process, especially for larger input perturbation regions.

Better bounding at comparable time cost could significantly improve the overall performance of such solvers, which is another exciting topic of future research.  

%\subsubsection*{Author Contributions}
%All authors have contributed equally.
%
%If you'd like to, you may include  a section for author contributions as is done in many journals. This is optional and at the discretion of the authors.
\newpage
\subsubsection*{Acknowledgments}
This work benefited from the HORIZON–MSCA-2023-DN-JD of the European Commission under the Grant Agreement No 101120296 (TENORS), the ANITI AI Cluster program under the Grant agreement n${}^\circ$ ANR-23-IACL-0002, the Basic Science Center Program (No: 12288201) of the National Natural Science Foundation of China, the National Key R\&D Program of China (2023YFA1009401), as well as the National Research Foundation, Prime Minister's Office, Singapore under its Campus for Research Excellence and Technological Enterprise (CREATE) programme.
%\newpage

\bibliography{ArxivVersion2.bib}
\bibliographystyle{iclr2025_conference}
\appendix
%\section{Appendix}
\section{Theoretical Discussion} \label{Apd:Theoretical}
\subsection{Dual Side of Polynomial Optimization relaxations - Moment Hierarchies}\label{Thm: hardness - proof}
We have discussed computing a global minimum of a multivariate polynomial using SDP relaxations obtained by interpreting the requirement for polynomials to be positive on sets defined with finitely many (in)equalities. \\
It is also possible to derive the corresponding dual relaxations. 
Notice that a polynomial $f$ in the variable $\bm{x}=(x_1,\dots,x_n)$ can also be written as $f=\sum_{\alpha\in \mathcal{A}} f_{\alpha} \bm{x}^{\alpha}$ with $ \mathcal{A}\subset \NN^n$ and $f_{\alpha}\in \RR$, where $\bm{x}^{\alpha}=x_1^{\alpha_1}\dotsc x_n^{\alpha_n}$.  Then the \emph{moment hierarchy} \citep{Lasserre2001} for the POP introduced in Section \ref{sec: POPbasics}  corresponds to:
\begin{subnumcases} {\tau^d_{\mom}:=
 \label{moment_relaxation}} 
\inf \limits_{\bm{y}}  L_{\bm{y}}(f)  \nonumber\\
%\nonumber \text{subject to:}\\
\text{s.t.} \quad  \bm{M}_{d}(\bm{y}) \succeq 0,\\
\phantom{\text{s.t.}}\quad\bm{M}_{d-1}(\bm{g}_i(\cdot)_j\bm{y}) \succeq 0, i\in \Iintv{1,L}, j\in \Iintv{1,n_i},\\
\phantom{\text{s.t.}}\quad\bm{M}_{d-1}(\bm{h}_i(\cdot)_j\bm{y}) = 0, i\in \Iintv{1,L},j\in \Iintv{1,n_i},\\
\phantom{\text{s.t.}}\quad\bm{M}_{d-1}(\bm{g}_{\bm{B}}(\cdot)_k\bm{y}) \succeq 0, k\in \Iintv{1,n_{\bm{B}}},\\
\phantom{\text{s.t.}}\quad y_0=1,
\end{subnumcases}
where $\bm{y}=(y_{\alpha})_{\alpha}$ is a sequence indexed by $\alpha\in\NN^n$ and $L_{\bm{y}}$ the linear functional defined by
\begin{equation}
    f\mapsto L_{\bm{y}}(f):=\sum_{\alpha} f_{\alpha}y_{\alpha}.
\end{equation}
For $d\in \NN$, $\bm{M}_d(\bm{y})$ denotes the \emph{moment matrix} of order $d$ associated with $\bm{y}$ and defined as follows
\begin{equation}
    \bm{M}_d (\bm{y})({\beta, \gamma}):=L_{\bm{y}}( \bm{x}^{\beta}\bm{x}^{\gamma}) = y_{\beta+\gamma} , \quad \forall \beta, \gamma \in \NN^n_d.
\end{equation}
Similarly, for $g=\sum_{\alpha}g_{\alpha}\bm{x}^{\alpha}\in \RR[\bm{x}]$, $\bm{M}_d(g\bm{y})$ denotes the \emph{localizing matrix} of order $d$ associated with $g$ and $\bm{y}$, defined as follows
\begin{equation}
    \bm{M}_d (g\bm{y})({\beta, \gamma}):=L_{\bm{y}}(g \bm{x}^{\beta}\bm{x}^{\gamma}) = \sum_{\alpha} g_{\alpha}y_{\alpha+\beta+\gamma} , \quad \forall \beta, \gamma \in \NN^n_d.
\end{equation}
When the correlative sparsity is present, analogous hierarchies of sparse moment relaxations can be derived as well \citep{LasserreCSP2006,MagronSpop2023}. Such reasoning was adopted in the following proof of Theorem \ref{Thm:Hardness_first_order}.
\begin{proof}[Proof of Theorem \ref{Thm:Hardness_first_order}]
Let $j\in \Iintv{1,n_L}$ and $c_j^L=\operatorname{nv}(\bm{W}^{[L]})_j+\bm{b}_j^{[L]}$ and define $f(\bm{x}_0,\bm{x}_{1},\dots,\bm{x}_{L}):=\bm{x}_{L,j}-\left(\frac{2}{c_j^L}\left(\left\langle\bm{W}^{[L]}_{(j,:)},\bm{x}_{L-1}\right\rangle+\bm{b}_j^{[L]}\right)-1\right)$. Then,
    \begin{align}
       \bm{M}=\kbordermatrix{
&1 & \bm{x}_{L-1}& \bm{x}_{L,j}\\
1 &1&\left(\bm{W}^{[L]}_{(j,:)}\right)^\intercal&0&\\
 \bm{x}_{L-1} &\bm{W}^{[L]}_{(j,:)}&\bm{W}^{[L]}_{(j,:)}\left(\bm{W}^{[L]}_{(j,:)}\right)^\intercal&\bm{0}&\\
\bm{x}_{L,j} &0&\bm{0}&1& } 
    \end{align}
 represents a feasible moment matrix for the sparse first-order moment relaxation of \eqref{stdBNNV-prob}, yielding the value of the objective function equal to $-1$ (because $||\bm{W}^{[L]}_{(j,:)}||_2^2=\operatorname{nv}(\bm{W}^{[L]})_j)$, implying that $\tau^{1}\leq -1$. 
 However, since $f(\bm{x}_0,\bm{x}_{1},\dots,\bm{x}_{L})\geq 0$ is one of the constraints in  \eqref{LP-relaxed-BNN}, we deduce that $\tau_{\text{LP}} \geq 0$.
\end{proof}
 As a direct consequence of Theorem \ref{Thm:Hardness_first_order}, we have the following corollary:
\begin{Corollaryy}
Consider an arbitrary BNN with depth $L\geq 2$, and let $j\in \Iintv{1,n_L}$. Then, 
\begin{equation}
   \bm{g}^{1}_{L,\text{LIN}}(\cdot)_j \notin \mathcal{Q}_1(\{\bm{g}_{L}(\cdot)_j\}),
\end{equation}
thus the set $\left\{ \bm{g}^{0}_{i,\text{LIN}}, \bm{g}^{1}_{i,\text{LIN}}, \bm{g}^{2}_{i,\text{LIN}},  i \in \Iintv{1,L} \right\}$ 
is not included in  $\mathcal{Q}_1(\{\bm{g}_1,\dots, \bm{g}_L,\bm{g}_{\bm{B}}\})$. 
%$\mathcal{Q}_1\left(\eqref{stdBNNV-prob}\right)$.
\end{Corollaryy}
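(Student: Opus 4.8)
The plan is to obtain the Corollary directly from Theorem~\ref{Thm:Hardness_first_order} by contraposition, through the elementary link between membership of an affine polynomial in the first-order quadratic module and the sign of the associated first-order relaxation value. The bridging fact I would record first is the following: for any affine objective $p$, if $p \in \mathcal{Q}_1(\bm g) + \mathcal{I}_1(\bm h)$, then $\lambda = 0$ is feasible in $\tau^{1} = \sup\{\lambda \mid p - \lambda \in \mathcal{Q}_1(\bm g) + \mathcal{I}_1(\bm h)\}$, so the first-order relaxation of \eqref{stdBNNV-prob} with objective $p$ satisfies $\tau^{1} \geq 0$. This single implication is the whole engine of the argument.

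Next I would set $p := \bm g^{1}_{L,\text{LIN}}(\cdot)_j$ and reuse the explicit feasible first-order (sparse) moment matrix $\bm M$ constructed in the proof of Theorem~\ref{Thm:Hardness_first_order} (valid under the Corollary's hypothesis $L \geq 2$). Writing $c_j^L := \operatorname{nv}(\bm W^{[L]})_j + \bm b^{[L]}_j$, which is strictly positive by the standing assumption $|\bm b^{[L]}_j| < \operatorname{nv}(\bm W^{[L]})_j$, one checks that $\bm g^{1}_{L,\text{LIN}}(\cdot)_j = c_j^L\, f$ where $f$ is the affine objective used in that proof; hence evaluating the moment functional at $\bm M$ gives $L_{\bm y}(p) = c_j^L\, L_{\bm y}(f) = -c_j^L < 0$. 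By weak duality (and using $\tau^{1} = \tau^{1}_{\cs}$, so that a sparse certificate is admissible), the first-order relaxation value with objective $p$ obeys $\tau^{1} \leq -c_j^L < 0$. Combined with the bridging fact, $p \in \mathcal{Q}_1(\bm g) + \mathcal{I}_1(\bm h)$ would force $\tau^{1} \geq 0$, a contradiction; therefore $\bm g^{1}_{L,\text{LIN}}(\cdot)_j \notin \mathcal{Q}_1(\bm g) + \mathcal{I}_1(\bm h)$.

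Finally I would read off both conclusions from this non-membership via the obvious nesting of truncated modules. Since $\mathcal{Q}_1(\{\bm g_L(\cdot)_j\}) \subseteq \mathcal{Q}_1(\{\bm g_1,\dots,\bm g_L,\bm g_{\bm B}\}) \subseteq \mathcal{Q}_1(\bm g) + \mathcal{I}_1(\bm h)$ — keeping only the multiplier attached to $\bm g_L(\cdot)_j$, then adjoining the zero ideal element — non-membership in the largest set transfers to both smaller ones. This yields at once $\bm g^{1}_{L,\text{LIN}}(\cdot)_j \notin \mathcal{Q}_1(\{\bm g_L(\cdot)_j\})$ (the first claim) and $\bm g^{1}_{L,\text{LIN}}(\cdot)_j \notin \mathcal{Q}_1(\{\bm g_1,\dots,\bm g_L,\bm g_{\bm B}\})$; as $\bm g^{1}_{L,\text{LIN}}(\cdot)_j$ is a member of the set $\{\bm g^{0}_{i,\text{LIN}}, \bm g^{1}_{i,\text{LIN}}, \bm g^{2}_{i,\text{LIN}} \mid i \in \Iintv{1,L}\}$, that set cannot be contained in $\mathcal{Q}_1(\{\bm g_1,\dots,\bm g_L,\bm g_{\bm B}\})$ (the second claim). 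I expect no genuine computational obstacle here; the only point demanding care is conceptual, namely that the object appearing in the definition of $\tau^{1}$ is the sum $\mathcal{Q}_1(\bm g) + \mathcal{I}_1(\bm h)$, so one must establish non-membership in this larger, ideal-augmented set and then let it descend to the ideal-free modules of the statement, while the scalar $c_j^L$ and the weak-duality step are routine bookkeeping.
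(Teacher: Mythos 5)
Your proof is correct and follows essentially the same route as the paper, which presents this corollary as a direct consequence of Theorem~\ref{Thm:Hardness_first_order}: the explicit feasible (sparse) moment matrix $\bm{M}$ from that theorem's proof evaluates the affine polynomial $\bm{g}^{1}_{L,\text{LIN}}(\cdot)_j = c_j^L f$ to $-c_j^L<0$, whereas membership in the first-order quadratic module (augmented by the ideal) would force the first-order relaxation value to be nonnegative, and non-membership then descends to the smaller modules $\mathcal{Q}_1(\{\bm{g}_{L}(\cdot)_j\})$ and $\mathcal{Q}_1(\{\bm{g}_1,\dots,\bm{g}_L,\bm{g}_{\bm{B}}\})$ by inclusion. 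Your explicit bookkeeping of the scaling $c_j^L$, the weak-duality step, the identity $\tau^{1}=\tau^{1}_{\cs}$, and the nesting of truncated modules simply fills in details the paper leaves implicit.
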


\subsection{Bounds from the second-order relaxation}\label{app: theorem illustration}
\begin{theoremm}\label{Thm:dual-LP-Lift}
    If $(\bm{x}_0,\bm{x}_{1},\dots,\bm{x}_{L})\mapsto f(\bm{x}_0,\bm{x}_{1},\dots,\bm{x}_{L})$ is affine, we have $\tau\geq \tau^{2}\geq    \tau_{\textsc{LP}}$. Consequently, any dual-feasible solution of \eqref{LP-relaxed-BNN}  yields a valid  SOS decomposition for the second-order SDP relaxation of \eqref{stdBNNV-prob}.\label{dualLP-stmnt2}
\end{theoremm}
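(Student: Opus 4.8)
The plan is to prove the two inequalities $\tau\ge\tau^{2}$ and $\tau^{2}\ge\tau_{\textsc{LP}}$ separately. The first is immediate: $\tau^{2}$ is the order-$2$ member of the convergent hierarchy $\tau^{d}\uparrow\tau$, so it is a lower bound for $\tau$. All the work is in $\tau^{2}\ge\tau_{\textsc{LP}}$, and I would attack it through LP duality. Since $f$ is affine and the LP feasible polytope is nonempty and bounded, strong LP duality produces nonnegative multipliers $\mu_c\ge 0$ yielding the polynomial identity $f-\tau_{\textsc{LP}}=\sum_{c}\mu_c\,c$, where $c$ ranges over the linear constraints of \eqref{LP-relaxed-BNN}: the box constraints $\bm{g}^{0}_{i,\textsc{LIN}}$, the linearised sign constraints $\bm{g}^{1}_{i,\textsc{LIN}},\bm{g}^{2}_{i,\textsc{LIN}}$, and $\bm{g}_{\bm B}$. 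Because $\mathcal{Q}_2(\bm{g})+\mathcal{I}_2(\bm{h})$ is a convex cone stable under addition and under multiplication by $\mu_c\ge 0$, it suffices to show that \emph{each} linear LP constraint already belongs to the second-order quadratic module of the QCQP \eqref{stdBNNV-prob}. Summing with the weights $\mu_c$ then certifies $f-\tau_{\textsc{LP}}\in\mathcal{Q}_2(\bm{g})+\mathcal{I}_2(\bm{h})$, i.e. $\tau^{2}\ge\tau_{\textsc{LP}}$, and this is precisely the announced conversion of a dual-feasible LP solution into a valid order-$2$ SOS decomposition.

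The box constraints are easy, using $\bm{x}_{i,j}^2=1$: the identity $1\pm\bm{x}_{i,j}=\tfrac12(1\pm\bm{x}_{i,j})^2-\tfrac12\,\bm{h}_i(\bm{x}_i)_j$ exhibits them as an SOS plus an ideal term (at the input layer the residual is carried by $\bm{g}_{\bm B}$ instead of the ideal). The heart is the linearised sign constraints. Writing $Y=\bm{x}_{i,j}$, $s=\langle\bm{W}^{[i]}_{(j,:)},\bm{x}_{i-1}\rangle+\bm{b}^{[i]}_j$ and $n=\operatorname{nv}(\bm{W}^{[i]})_j$, I would first establish the exact decomposition
\[
\bm{g}^{1}_{i,\textsc{LIN}}(\cdot)_j \;\equiv\; \tfrac12(1-Y)^2\,\bm{g}_{i}(\cdot)_j \;+\; \Tilde{\bm{g}}^{\text{t1}}_{i}(\cdot)_j \pmod{\mathcal{I}_2(\bm{h})},
\]
obtained from $(1+Y)(n+\bm{b}^{[i]}_j-s)+(Y-1)s=\bm{g}^{1}_{i,\textsc{LIN}}(\cdot)_j$ together with $\tfrac12(1-Y)^2\,Ys\equiv(Y-1)s$. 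The first summand is an SOS multiple of the sign constraint $\bm{g}_i(\cdot)_j=Ys$ (the multiplier $\tfrac12(1-Y)^2\in\Sigma[\bm{x}]_1$ has the degree allowed at order $2$), hence lies in $\mathcal{Q}_2(\bm{g})$; the second summand is exactly the tautology $\Tilde{\bm{g}}^{\text{t1}}_{i}$ of \eqref{tautology_1}. An analogous identity expresses $\bm{g}^{2}_{i,\textsc{LIN}}$ through $\tfrac12(1+Y)^2\bm{g}_i(\cdot)_j$ and $\Tilde{\bm{g}}^{\text{t2}}_{i}$. Everything thus reduces to showing that the tautologies \eqref{tautology_1}--\eqref{tautology_2} themselves belong to $\mathcal{Q}_2(\bm{g})+\mathcal{I}_2(\bm{h})$.

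This is the key (and initially surprising) step: a tautology such as $\Tilde{\bm{g}}^{\text{t1}}_{i}(\cdot)_j=(1+\bm{x}_{i,j})\sum_k(|\bm{W}^{[i]}_{jk}|-\bm{W}^{[i]}_{jk}\bm{x}_{i-1,k})$ is a sum of products of two bound-type constraints on the distinct variables $\bm{x}_{i,j}$ and $\bm{x}_{i-1,k}$, and such products are \emph{not} available in the first-order module — this is exactly the gap that forced the tautologies to be added by hand in Section \ref{Sec:tighter-first-order} and that underlies Theorem \ref{Thm:adding-tautology}. The claim is that at order $2$ they come for free, and I would prove it by the verified exact identity
\[
(1+\bm{x}_{i,j})(1-\bm{x}_{i-1,k}) = \tfrac14\big[(1+\bm{x}_{i,j})(1-\bm{x}_{i-1,k})\big]^2 - \tfrac14(1-\bm{x}_{i-1,k})^2\bm{h}_{i}(\bm{x}_i)_j - \tfrac12(1+\bm{x}_{i,j})\bm{h}_{i-1}(\bm{x}_{i-1})_k,
\]
whose right-hand side is a \emph{quartic} SOS plus two ideal terms of admissible degree, hence an element of $\mathcal{Q}_2(\bm{g})+\mathcal{I}_2(\bm{h})$; the cases with $\bm{W}^{[i]}_{jk}=-1$ are symmetric, and at the input layer one replaces $\bm{h}_{i-1}(\bm{x}_{i-1})_k$ by $\bm{g}_{\bm B}(\bm{x}_0)_k$ and reuses $1-\bm{x}_{0,k}=\tfrac12(1-\bm{x}_{0,k})^2+\tfrac12\bm{g}_{\bm B}(\bm{x}_0)_k$. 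Summing over $k$ gives $\Tilde{\bm{g}}^{\text{t1}}_{i}(\cdot)_j\in\mathcal{Q}_2(\bm{g})+\mathcal{I}_2(\bm{h})$, and with it the membership of $\bm{g}^{1}_{i,\textsc{LIN}},\bm{g}^{2}_{i,\textsc{LIN}}$.

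The main obstacle, then, is not a hard estimate but the structural observation that the degree-$4$ multipliers present at order $2$ (the square $\big[(1+\bm{x}_{i,j})(1-\bm{x}_{i-1,k})\big]^2$ and the terms $(1+\bm{x}_{i,j})^2\bm{g}_{\bm B}$) are exactly what is needed to absorb a product of two bound constraints, whereas the first-order relaxation only admits quadratic $\sigma_0$ and constant multipliers on the quadratic generators — which is why $\tau^{1}$ cannot reproduce the linear cuts (Theorem \ref{Thm:Hardness_first_order}) but $\tau^{2}$ can. The remaining care is bookkeeping: checking that every multiplier degree fits the truncations of Definition \ref{truncations}, and treating the continuous input layer uniformly with the Boolean hidden layers via $\bm{g}_{\bm B}$. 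Assembling the pieces, $f-\tau_{\textsc{LP}}=\sum_c\mu_c c$ lands in $\mathcal{Q}_2(\bm{g})+\mathcal{I}_2(\bm{h})$, proving $\tau^{2}\ge\tau_{\textsc{LP}}$ and, since the construction is explicit in the LP multipliers, furnishing the claimed order-$2$ SOS decomposition from any dual-feasible LP solution.
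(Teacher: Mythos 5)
Your proposal is correct and matches the paper's own argument: the paper's proof likewise reduces to showing that each LP constraint lies in the order-two quadratic module modulo the ideal, via identities whose quartic SOS terms $\tfrac{(1\pm\bm{x}_{i,j})^2}{4}\sum_k\bigl(1\mp\bm{W}^{[i]}_{(j,k)}\bm{x}_{i-1,k}\bigr)^2$ are, after reduction by $\bm{x}^2=1$, exactly the tautologies you factor through. Your only departures are organizational — you spell out the LP-duality aggregation step (which the paper leaves implicit and illustrates only in its example) and isolate the membership $\Tilde{\bm{g}}^{\text{t}1}_i,\Tilde{\bm{g}}^{\text{t}2}_i\in\mathcal{Q}_2+\mathcal{I}_2$ as a separate lemma — and both check out, including the degree bookkeeping and the input-layer case handled through $\bm{g}_{\bm{B}}$.
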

\begin{proof}[Proof of Theorem \ref{Thm:dual-LP-Lift}]
We proceed by analyzing the feasible sets of each problem. Firstly, both problems share the same input perturbation region given by \eqref{constr - PerturbationLP}. For the remaining constraints, let us consider three different cases:
\newline
\textit{\textit{(i)} Single node bounds:} Let $i\in \Iintv{1,L}$ and $j\in\Iintv{1,n_{i}}$. Then, the following equations on $\mathbb{R}[\bm{x}_{i}]$ hold true:
\begin{align}
   \bm{g}^{0}_{i,\text{LIN}}(\bm{x}_{i})_j=\frac{1}{2}(1-\bm{x}_{i,j})^2+\frac{1}{2} \bm{h}_{i}(\bm{x}_{i})_j, \phantom{,}\text{and}\phantom{.} \bm{g}^{0}_{i,\text{LIN}}(\bm{x}_{i})_{n_{i}+j}=\frac{1}{2}(\bm{x}_{i,j}+1)^2+\frac{1}{2} \bm{h}_{i}(\bm{x}_{i})_j.\label{Quadratic-module-Input-LbUb}
\end{align}
Thus $\{ \bm{g}^{0}_{i,\text{LIN}}, i \in \Iintv{1,L} \} \subset \mathcal{Q}_2(\{\bm{g}_1,\dots, \bm{g}_L,\bm{g}_{\bm{B}}\})$. \\
\textit{(ii) Interaction between two adjacent hidden layers:} Fix $i\in\Iintv{2,L}$ \begin{comment}
    By construction, $\blue{\bm{x}_{i}}= \text{sign}(\bm{W}^{[i]}\blue{\bm{x}_{i-1}}+\bm{b}^{[i]})$.
\end{comment}  
and $j\in\Iintv{1,n_i}$. Let us define $c_{\pm,j}^{i}:=\operatorname{nv}(\bm{W}^{[i]})_j\pm\bm{b}^{[i]}_j>0$. 
If we apply the substitution rules $\bm{x}_{i-1}\odot\bm{x}_{i-1}=\bm{1}$ and  $\bm{x}^2_{i,j}=1 $, then the following equations always hold true on $\mathbb{R}[\bm{x}_{i-1},\bm{x}_{i,j}]$:
\begin{align}\label{Quadratic-module-BNN-Lb}
    \bm{g}^{1}_{i,\text{LIN}}(\bm{x}_{i},\bm{x}_{i-1})_j=\frac{(1-\bm{x}_{i,j})^2}{2c_{+,j}^{i}}\bm{g}_{i}(\bm{x}_{i},\bm{x}_{i-1})_j+\frac{(1+\bm{x}_{i,j})^2}{4c_{+,j}^{i}}\sum_{k=1}^{n_{i-1}}\left(1-\bm{W}^{[i]}_{(j,k)}\bm{x}_{i-1,k}\right)^2,
\end{align}
\begin{align}\label{Quadratic-module-BNN-Ub}
    \bm{g}^{2}_{i,\text{LIN}}(\bm{x}_{i},\bm{x}_{i-1})_j=\frac{(1+\bm{x}_{i,j})^2}{2c_{-,j}^{i}}\bm{g}_{i}(\bm{x}_{i},\bm{x}_{i-1})_j+\frac{(1-\bm{x}_{i,j})^2}{4c_{-,j}^{i}}\sum_{k=1}^{n_{i-1}}\left(1+\bm{W}^{[i]}_{(j,k)}\bm{x}_{i-1,k}\right)^2.
\end{align}
Equations \eqref{Quadratic-module-BNN-Lb} and \eqref{Quadratic-module-BNN-Ub} certify that 
 $\{ \bm{g}^{1}_{i,\text{LIN}}, \bm{g}^{2}_{i,\text{LIN}},  i \in \Iintv{1,L} \} \subset \mathcal{Q}_2(\{\bm{g}_1,\dots, \bm{g}_L,\bm{g}_{\bm{B}}\})$. \\
\\\textit{(iii) Interaction between the input layer and the first hidden layer:}
\begin{comment}
    Without loss of generality, let us assume $\bm{x^0} \in [-1,1]^{n_0}$. 
\end{comment}
For any $j\in\Iintv{1,n_1}$, let us define $c_{\pm,j}^{1}:=\operatorname{nv}(\bm{W}^{[1]})_j\pm\bm{b}^{[1]}_j>0$. It follows that
\begin{align}\label{Quadratic-module-In-Lb}
    \bm{g}^{1}_{1,\text{LIN}}(\bm{x}_{1},\bm{x}_{0})_j=\frac{(1-\bm{x}_{1,j})^2}{2c_{+,j}^{1}}\bm{g}_{1}(\bm{x}_{0},\bm{x}_{1})_j+\frac{(1+\bm{x}_{1,j})^2}{2c_{+,j}^{1}}\sum_{k=1}^{n_0}\left(\left|\bm{W}^{[1]}_{(j,k)}\right|-\bm{W}^{[1]}_{(j,k)}\bm{x}_{0,k}\right),
\end{align}
\begin{align}\label{Quadratic-module-In-Ub}
    \bm{g}^{2}_{1,\text{LIN}}(\bm{x}_{1},\bm{x}_{0})_j=\frac{(1+\bm{x}_{1,j})^2}{2c_{-,j}^{1}}\bm{g}_{1}(\bm{x}_{0},\bm{x}_{1})_j+\frac{(1-\bm{x}_{1,j})^2}{2c_{-,j}^{1}}\sum_{k=1}^{n_0}\left(\left|\bm{W}^{[1]}_{(j,k)}\right|+\bm{W}^{[1]}_{(j,k)}\bm{x}_{0,k}\right),
\end{align}
provide the $\mathcal{Q}_2\left(\{\bm{g}_1(\cdot)_j\}\right)$-based representations for $\bm{g}^{1}_{1,\text{LIN}}(\cdot)_j$ and $\bm{g}^{2}_{1,\text{LIN}}(\cdot)_j$, valid after substituting $\bm{x}^2_{1,j}=1$.
\end{proof}
\begin{examplee}[Illustration of Theorem \ref{Thm:dual-LP-Lift}]
This theorem indicates that any dual-feasible solution of \eqref{LP-relaxed-BNN} provides a feasible solution for the second-order SOS relaxation \eqref{QCQPBNN-SDPrelaxation} of QCQP \eqref{stdBNNV-prob}. 
For instance, let us consider the toy BNN from  Example \eqref{ToyBNN-exampleIntro}.  Recall that we aim to minimize the linear function $f_1^{\adv}(\bm{x})=2\bm{x}_{2,2}-1$. Since the 
LP relaxation problem has following constraints:
\begin{align}
    -\frac{4\,\bm{x}_{1,2}}{3}+\frac{4\,\bm{x}_{1,1}}{3}+\bm{x}_{2,2}\geq -\frac{5}{3},~-\bm{x}_{1,1}\geq -1,~ \bm{x}_{1,2}\geq -1,~\bm{x}_{2,2}\geq -1,
\end{align}
the expression 
\begin{align}
    \begin{split}
        \left(2\bm{x}_{2,2}-1\right)+4&= \frac{3}{10}\left(-\frac{4\,\bm{x}_{1,2}}{3}+\frac{4\,\bm{x}_{1,1}}{3}+\bm{x}_{2,2}+\frac{5}{3}\right)+\frac{2}{5}\left(-\bm{x}_{1,1}+1\right)\\
        &+\frac{2}{5}\left(-\bm{x}_{1,2}+1\right)+\frac{17}{10}\left(\bm{x}_{2,2}+1\right)\geq 0
    \end{split}
\end{align}
gives one feasible dual solution of the LP relaxation problem. From this dual solution, we can recover the following SOS decomposition, certifying that $f_1^{\adv} \geq -3$:    
\begin{equation}
\begin{split}
    \left(2\bm{x}_{2,2}-1\right)+4&=\frac{3}{10}\left(\frac{2}{3}(\bm{x}_{2,2}-1)\left(\bm{x}_{1,2}-\bm{x}_{1,1}-\frac{1}{2}\right)+\frac{2}{3}(1+\bm{x}_{2,2})(2-\bm{x}_{1,2}+\bm{x}_{1,1})\right)\\
&+\frac{2}{5}\cdot\frac{1}{2}\left(1-\bm{x}_{1,1}\right)^2+\frac{2}{5}\cdot\frac{1}{2}\left(1+ \bm{x}_{1,2}\right)^2+\frac{17}{10}\cdot\frac{1}{2}\left(1+  \bm{x}_{2,2}\right)^2. 
\end{split}
\end{equation}
\end{examplee}
\subsection {Importance of tautologies }\label{Proofs of theorems}

\begin{proof}[Proof of Theorem \ref{Thm:adding-tautology}] Since all the variables are bounded, we have $\tau^{1}_{\tighter, \cs}=\tau^{1}_{\tighter}$. Moreover, the inequality $\tau^{1}_{\tighter, \cs}\geq\tau^{1}$ is a consequence of the identity
\begin{align}
    \bm{g}_i(\bm{x}_{i},\bm{x}_{i-1})=\frac{1}{2}\Tilde{\bm{g}}^{1}_{i}(\bm{x}_{i},\bm{x}_{i-1})+ \frac{1}{2}\Tilde{\bm{g}}^{2}_{i}(\bm{x}_{i},\bm{x}_{i-1}),\: i\in\Iintv{2,L}.
\end{align}
In order to prove that $\tau^{1}_{\tighter, \cs}\geq\tau_{\text{LP}}$ holds, we show that all linear functions $\bm{g}^{1}_{i,\text{LIN}}, \bm{g}^{2}_{i,\text{LIN}}, i \in \Iintv{1,L}$, involved in the constraints \eqref{LIN1-MILP} and  \eqref{LIN2-MILP} belong to 
$\mathcal{Q}_1(\{
\Tilde{\bm{g}}^1_{i}, \Tilde{\bm{g}}^2_{i}, 
\Tilde{\bm{g}}^\text{t1}_{i}, \Tilde{\bm{g}}^\text{t2}_{i}, i \in \Iintv{1,L}
\})$. 
%$\mathcal{Q}_1\left(\eqref{QCQPBNN-refined}\right)$. 
We consider two different scenarios: 
\\
\\
\textit{(i)\:}  Let $i\in\Iintv{2,L}$, $j\in\Iintv{1,n_i}$ and $c_{\pm,j}^{i}:=\operatorname{nv}(\bm{W}^{[i]})_j\pm\bm{b}^{[i]}_j>0$. When we apply the substitution rules $\bm{x}_{i-1}\odot\bm{x}_{i-1}=\bm{1}$ and  $\bm{x}^2_{i,j}=1 $, then the following equations always hold true on $\mathbb{R}[\bm{x}_{i-1},\bm{x}_{i,j}]$:
\begin{align}\label{Quadratic-module-BNN-Tighter_Lb}
    \bm{g}^{1}_{i,\text{LIN}}(\bm{x}_{i},\bm{x}_{i-1})_j=\frac{1}{c_{+,j}^{i}}\Tilde{\bm{g}}^2_{i}(\bm{x}_{i},\bm{x}_{i-1})_j\color{black}+\frac{1}{c_{+,j}^{i}}\Tilde{\bm{g}}^\text{t1}_{i}(\bm{x}_{i},\bm{x}_{i-1})_j,
\end{align}
\begin{align}\label{Quadratic-module-BNN-Tighter_Ub}
    \bm{g}^{2}_{i,\text{LIN}}(\bm{x}_{i},\bm{x}_{i-1})_j=\frac{1}{c_{-,j}^{i}}\Tilde{\bm{g}}^1_{i}(\bm{x}_{i},\bm{x}_{i-1})_j\color{black}+\frac{1}{c_{-,j}^{i}}\Tilde{\bm{g}}^\text{t2}_{i}(\bm{x}_{i},\bm{x}_{i-1})_j.
\end{align}
\textit{(ii)\: } Secondly, for any $j\in\Iintv{1,n_1}$, and $c_{\pm,j}^{1}=\operatorname{nv}(\bm{W}^{[1]})_j\pm\bm{b}^{[1]}_j>0$, substituting $\bm{x}^2_{1,j}=1$ yields the following equalities on $\mathbb{R}[\bm{x}_{0},\bm{x}_{1,j}]$:
\begin{align}\label{Quadratic-module-In-TighterLb}
    \bm{g}^{1}_{1,\text{LIN}}(\bm{x}_{1},\bm{x}_{0})_j=\frac{1}{c_{+,j}^{1}}\bm{g}_{2}(\bm{x}_{1},\bm{x}_{0})_j\color{black}+\frac{1}{c_{+,j}^{1}}\Tilde{\bm{g}}^\text{t2}_{1}(\bm{x}_{1},\bm{x}_{0})_j,
\end{align}
\begin{align}\label{Quadratic-module-In-TighterUb}
    \bm{g}^{2}_{1,\text{LIN}}(\bm{x}_{1},\bm{x}_{0})_j=\frac{1}{c_{+,j}^{1}}\bm{g}_{1}(\bm{x}_{1},\bm{x}_{0})_j\color{black}+\frac{1}{c_{+,j}^{1}}\Tilde{\bm{g}}^\text{t1}_{1}(\bm{x}_{1},\bm{x}_{0})_j,
\end{align}
certifying that $\bm{g}^{1}_{1,\text{LIN}}(\bm{x}_{1},\bm{x}_{0})_j$ and $\bm{g}^{2}_{1,\text{LIN}}(\bm{x}_{1},\bm{x}_{0})_j$ belong to 
$\mathcal{Q}_1(\{
\Tilde{\bm{g}}^1_{i}, \Tilde{\bm{g}}^2_{i}, 
\Tilde{\bm{g}}^\text{t1}_{i}, \Tilde{\bm{g}}^\text{t2}_{i} , i \in \Iintv{1,L}
\})$. 
%$\mathcal{Q}_1\left(\eqref{QCQPBNN-refined}\right)$.

Finally, Theorem \ref{Thm:Hardness_first_order} implies that there exists an affine $f$ such that $\tau^{1}_{\tighter, \cs}\geq \tau_{\textsc{LP}}>\tau^{1}.$\\
Finally, thanks to the Theorem~\eqref{Thm:Hardness_first_order}, we deduce that there exists linear function $f$ such that  
\begin{align}
    \tau^{*,1}_{\tighter}(f;\bm{B})\geq\tau^{*}_{\text{LP}}(f;\bm{B})>\tau^{*,1}(f;\bm{B}).
\end{align}
\end{proof}
\begin{remarkk}
 We insist that tautologies \eqref{tautology_1} and \eqref{tautology_2} are both important and necessary.  If we consider the optimization problem without tautologies, where only   \eqref{LPofSign-1_bis} and \eqref{LPofSign-2_bis} are used to replace \eqref{StandardSign - Inequality}, then we have the following  strict containment relationship: 
 \begin{align}
     \mathcal{Q}_1(\{\bm{g}_{\bm{B}} ,\Tilde{\bm{g}}_i^1,\Tilde{\bm{g}}_i^2 ,i\in\Iintv{1,L}\}) \subsetneqq 
     \mathcal{Q}_1(\{\bm{g}_{\bm{B}},
\Tilde{\bm{g}}^1_{i}, \Tilde{\bm{g}}^2_{i}, 
\Tilde{\bm{g}}^\text{t1}_{i}, \Tilde{\bm{g}}^\text{t2}_{i} , i \in \Iintv{1,L}
\}). 
     %\mathcal{Q}_1(\eqref{QCQPBNN-refined}).
 \end{align}
Indeed, following the same reasoning as in the proof of Theorem \ref{Thm:Hardness_first_order}, we obtain that
  \begin{align}
       \bm{M}=\kbordermatrix{
&1 & \bm{x}_{L-1}& \bm{x}_{L,j}\\
1 &1&a\left(\bm{W}^{[L]}_{(j,:)}\right)^\intercal&0&\\
 \bm{x}_{L-1}&a\bm{W}^{[L]}_{(j,:)}&\bm{W}^{[L]}_{(j,:)}\left(\bm{W}^{[L]}_{(j,:)}\right)^\intercal&t\bm{W}^{[L]}_{(j,:)}&\\
\bm{x}_{L,j} &0&t\left(\bm{W}^{[L]}_{(j,:)}\right)^\intercal&1& } 
    \end{align}
 with $a=\frac{1}{2}\sqrt{2-\frac{\left(\bm{b}^{[L]}_j\right)^2}{\left(\operatorname{nv}(\bm{W}^{[L]})_j\right)^2}}-\frac{\bm{b}^{[L]}_j}{2\operatorname{nv}(\bm{W}^{[L]})_j}$ and $t=\sqrt{1-a^2}$  is a feasible moment matrix for which the value of the objective function is $-\frac{\operatorname{nv}(\bm{W}^{[L]})_j}{\operatorname{nv}(\bm{W}^{[L]})_j+\bm{b}^{[L]}_j}\left( \sqrt{2-\frac{\left(\bm{b}^{[L]}_j\right)^2}{\left(\operatorname{nv}(\bm{W}^{[L]})_j\right)^2}}-1\right)<0$. To prove that $\bm{M}$ is positive semidefinite, we observe that $M$ has a $n_{L-1}$-dimensional null space corresponding to the direct sum of the sets $A$ and $B$, where
 \begin{align}
 \begin{split}
 A:=&\left\{\left(0,\bm{v},0 \right)^\intercal\in\mathbb{R}^{n_{L-1}+2}\: \big| \:\left<\bm{W}^{[L]}_{(j,:)},\bm{v}\right>=0\right\},\\
B:=&\left\{\alpha\left(-a\operatorname{nv}(\bm{W}^{[L]})_j,\bm{W}^{[L]}_{(j,:)},-t\operatorname{nv}(\bm{W}^{[L]})_j\right)^\intercal \:\big|\:\alpha\in\mathbb{R}\right\}.
 \end{split}
 \end{align} 
 Thus, $\bm{M}$ has $n_{L-1}$ eigenvalues equal to zero, and  one eigenvalue equal to $1$ corresponding to the eigenvector $\left(-t,\bm{0},a\right)^\intercal$. Since the  trace of $\bm{M}$ is $\operatorname{nv}(\bm{W}^{[L]})_j+2$, we know that the maximal eigenvalue of $\bm{M}$ is $\operatorname{nv}(\bm{W}^{[L]})_j+1$.   
\end{remarkk}
Details from the previous discussion can be summarized in the following corollary:
\begin{Corollaryy}
Let $i\in\Iintv{2,L}$ and $j\in\Iintv{1,n_i}$. If the substitution rule $\bm{x}^2_{i,j}=1$ is applied on the space $\mathbb{R}[\bm{x}_{i-1},\bm{x}_{i,j}]$,
\begin{equation}
   \bm{g}^{1}_{i,\text{LIN}}(\cdot)_j \notin \mathcal{Q}_1(\{\bm{g}_{i}(\cdot)_j,\bm{g}^{0}_{i,\text{LIN}}(\cdot)_j,\bm{g}^{0}_{i,\text{LIN}}(\cdot)_{n_i+j} \}).
\end{equation}
Moreover, at least one of the polynomials defining the constraints from  \eqref{LPofSign-1_bis}-\eqref{tautology_2} is not in $\mathcal{Q}_1(\{\bm{g}_{i}(\cdot)_j,\bm{g}^{0}_{i,\text{LIN}}(\cdot)_j,\bm{g}^{0}_{i,\text{LIN}}(\cdot)_{n_i+j} \})$, and consequently 
\begin{equation}
    \mathcal{Q}_1(\{\bm{g}_{i}(\cdot)_j,\bm{g}^{0}_{i,\text{LIN}}(\cdot)_j,\bm{g}^{0}_{i,\text{LIN}}(\cdot)_{n_i+j} \})\subsetneqq
    %\mathcal{Q}_1(\eqref{QCQPBNN-refined})
    \mathcal{Q}_1(\{
\Tilde{\bm{g}}^1_{i}, \Tilde{\bm{g}}^2_{i}, 
\Tilde{\bm{g}}^\text{t1}_{i}, \Tilde{\bm{g}}^\text{t2}_{i}, i \in \Iintv{1,L}
\}). 
\end{equation}
\end{Corollaryy}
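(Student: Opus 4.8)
The plan is to prove all three assertions with a single dual (pseudo-moment) certificate, reusing the moment matrix from the proof of Theorem \ref{Thm:Hardness_first_order} but now read against the \emph{smaller} generating family. Throughout I work in the clique ring $\mathbb{R}[\bm{x}_{i-1},\bm{x}_{i,j}]$ modulo the substitution $\bm{x}_{i,j}^2=1$, so that membership in $\mathcal{Q}_1(\cdot)$ is understood together with the degree-one ideal generated by $\bm{h}_i(\cdot)_j$. The governing principle is weak duality for the moment relaxation \eqref{moment_relaxation}: if $p\in\mathcal{Q}_1(\bm{g})+\mathcal{I}_1(\{\bm{h}_i(\cdot)_j\})$, then $L_{\bm{y}}(p)\geq 0$ for every pseudo-moment sequence $\bm{y}$ whose order-one moment matrix is positive semidefinite, whose order-zero localizing values on the generators are nonnegative, and which annihilates the ideal. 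Exhibiting one such $\bm{y}$ with $L_{\bm{y}}(\bm{g}^{1}_{i,\text{LIN}}(\cdot)_j)<0$ proves the first non-membership.

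First I take the order-one pseudo-moment matrix
\begin{align}
\bm{M}=\kbordermatrix{
&1 & \bm{x}_{i-1}& \bm{x}_{i,j}\\
1 &1&\left(\bm{W}^{[i]}_{(j,:)}\right)^\intercal&0&\\
 \bm{x}_{i-1} &\bm{W}^{[i]}_{(j,:)}&\bm{W}^{[i]}_{(j,:)}\left(\bm{W}^{[i]}_{(j,:)}\right)^\intercal&\bm{0}&\\
\bm{x}_{i,j} &0&\bm{0}&1& },
\end{align}
the layer-$i$ analogue of the matrix in Theorem \ref{Thm:Hardness_first_order}. It is positive semidefinite because the $\bm{x}_{i,j}$ row and column decouple (all their cross-entries vanish), leaving the unit diagonal entry and the principal block indexed by $(1,\bm{x}_{i-1})$, which is the outer product of $(1,\bm{W}^{[i]}_{(j,:)})$ with itself, hence rank-one. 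The associated functional satisfies $L_{\bm{y}}(\bm{x}_{i,j}^2-1)=0$, $L_{\bm{y}}(1-\bm{x}_{i,j})=L_{\bm{y}}(1+\bm{x}_{i,j})=1\geq 0$, and, since the $(\bm{x}_{i-1},\bm{x}_{i,j})$ block is zero, $L_{\bm{y}}(\bm{g}_i(\cdot)_j)=L_{\bm{y}}(\bm{x}_{i,j}(\bm{W}^{[i]}_{(j,:)}\bm{x}_{i-1}+\bm{b}^{[i]}_j))=0\geq 0$. Thus $L_{\bm{y}}$ is nonnegative on the smaller module. Using $\|\bm{W}^{[i]}_{(j,:)}\|_2^2=\operatorname{nv}(\bm{W}^{[i]})_j$ (valid because the entries lie in $\{-1,0,1\}$), a direct evaluation gives $L_{\bm{y}}(\bm{g}^{1}_{i,\text{LIN}}(\cdot)_j)=-(\operatorname{nv}(\bm{W}^{[i]})_j+\bm{b}^{[i]}_j)=-c_{+,j}^{i}<0$, where $c_{+,j}^{i}>0$ by the activation assumption $|\bm{b}^{[i]}_j|<\operatorname{nv}(\bm{W}^{[i]})_j$. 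This forces $\bm{g}^{1}_{i,\text{LIN}}(\cdot)_j\notin\mathcal{Q}_1(\{\bm{g}_i(\cdot)_j,\bm{g}^{0}_{i,\text{LIN}}(\cdot)_j,\bm{g}^{0}_{i,\text{LIN}}(\cdot)_{n_i+j}\})$, establishing the first claim.

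For the second claim I invoke the identity \eqref{Quadratic-module-BNN-Tighter_Lb} from the proof of Theorem \ref{Thm:adding-tautology}, which expresses $\bm{g}^{1}_{i,\text{LIN}}(\cdot)_j$ as a conic combination, with nonnegative constant multipliers, of $\Tilde{\bm{g}}^{2}_{i}(\cdot)_j$ and $\Tilde{\bm{g}}^{\text{t1}}_{i}(\cdot)_j$. Since any quadratic module is closed under such combinations, if both of these polynomials belonged to the smaller module then so would $\bm{g}^{1}_{i,\text{LIN}}(\cdot)_j$, contradicting the first claim; hence at least one of the polynomials in \eqref{LPofSign-1_bis}--\eqref{tautology_2} lies outside $\mathcal{Q}_1(\{\bm{g}_i(\cdot)_j,\bm{g}^{0}_{i,\text{LIN}}(\cdot)_j,\bm{g}^{0}_{i,\text{LIN}}(\cdot)_{n_i+j}\})$.

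The strict containment follows by combining both directions. For the forward inclusion I verify that each generator of the smaller module lies in the larger one: $\bm{g}_i(\cdot)_j=\tfrac12(\Tilde{\bm{g}}^{1}_{i}(\cdot)_j+\Tilde{\bm{g}}^{2}_{i}(\cdot)_j)$, while $1-\bm{x}_{i,j}=\tfrac12(1-\bm{x}_{i,j})^2$ and $1+\bm{x}_{i,j}=\tfrac12(1+\bm{x}_{i,j})^2$ modulo $\bm{x}_{i,j}^2=1$ are squares and hence belong to every order-one module. Strictness is then witnessed by $\bm{g}^{1}_{i,\text{LIN}}(\cdot)_j$, which lies in the larger module (by the decomposition above) but not in the smaller one (by the first claim). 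I expect the main obstacle to be bookkeeping rather than conceptual: one must confirm that the single-clique functional $L_{\bm{y}}$ is simultaneously valid for the degree-one ideal and nonnegative on every generator of the smaller module — and, crucially, that this smaller module does \emph{not} constrain $\bm{x}_{i-1}$ to be binary, which is exactly the slack the Dirac-at-$\bm{W}^{[i]}_{(j,:)}$ construction exploits — so that weak duality genuinely certifies non-membership rather than merely suggesting it.
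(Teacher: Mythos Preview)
Your proof is correct and follows essentially the same route the paper sketches when it writes that the corollary ``summarizes the previous discussion'': you reuse the layer-$i$ version of the rank-one moment matrix from Theorem~\ref{Thm:Hardness_first_order} to certify non-membership (checking also the extra bound generators $1\pm\bm{x}_{i,j}$, which that matrix satisfies), and you combine the identity \eqref{Quadratic-module-BNN-Tighter_Lb} with $\bm{g}_i=\tfrac12(\Tilde{\bm{g}}^1_i+\Tilde{\bm{g}}^2_i)$ and the square representations of $1\pm\bm{x}_{i,j}$ to obtain both the forward inclusion and the strictness witness. Your explicit observation that the smaller module imposes no binarity on $\bm{x}_{i-1}$, so the Dirac-at-$\bm{W}^{[i]}_{(j,:)}$ pseudo-moment is admissible, is exactly the point the paper leaves implicit.
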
\label{tautology corollary}

Numerical experiments, simultaneously   illustrating both scalability and  bound superiority of $\tau_{\tighter,\cs}^1$, are reported  in Table \ref{tab: scalab+bound}. 
\begin{table}[H]
\centering
\caption{Behaviour of bounds from the first-order sparse relaxations on randomly generated
$\|.\|_\infty$-verification instances. Parameter 
$s$ represents row sparsity, i.e., the number of non-zero elements in each of the weight matrices rows, while "dense" indicates fully populated weight matrices. The relative $\tau_{\text{LP}}$ running time being significantly smaller, we omit it from the table.}
\label{tab: scalab+bound}
\begin{tabular}{lcccccc}
\toprule
Network size and sparsity & \multicolumn{3}{c} {bound} & \multicolumn{2}{c}{ \textit{t (s)}}\\
\cmidrule(lr){2-4} \cmidrule(lr){5-6}
 & $\tau_{\tighter,\cs}^{1}$ & $\tau_{\text{QCQP}}^{1}$ & $\tau_{\text{LP}}$ & $\tau_{\tighter,\cs}^{1}$ & $\tau_{\text{QCQP}}^{1}$ \\
\midrule
$[300,100,100],~\text{dense}$ &  \textbf{-24.71} &  -78.64 & -78.83 & 413.51 & 427.28 \\
$[500,100,100],~\text{dense}$  &  \textbf{-27.17} & -78.58 & -77.89 &  795.41 & 721.81  \\
$[300,100,100,100],~\text{dense}$ &  \textbf{-19.49} & -77.06 & -76.81 & 739.21 & 992.66 \\
$[500,100,100,100], ~\text{dense}$ &  \textbf{-32.35} & -82.35 & -81.94 &  928.16 & 964.56 \\
$[1000,100,100,100], ~\text{dense}$ &  \textbf{ -26.19 } & -81.70 & -81.32 & 1764.67 & 1659.13 \\

$[300,100,100,100,100], ~\text{dense}$ &  \textbf{-30.58 } &  -80.59&-80.42 & 966.76&  1523.92\\
$[500,100,100,100,100], ~\text{dense}$ &  \textbf{-16.19 } &  -77.61& -77.25 & 1341.33& 1498.51\\

$[1000,100,100,100,100], ~\text{dense}$ &  \textbf{-18.31 } &  -76.32 &-75.89 & 1959.30& 2237.44\\

$[300,100,100,100,100,100], ~\text{dense}$ &  \textbf{-28.62} &  -80.95 &-80.99 & 1012.18& 1803.31\\
$[500,100,100,100,100,100], ~\text{dense}$ &  \textbf{-16.04} &  -78.10 & -78.12 &  1255.16& 1641.50\\
$[1000,100,100,100,100,100], ~\text{dense}$ &  \textbf{-32.54} &  -83.93 &-82.84 & 2225.89& 2866.54\\

$[500,100,100],~s=10$&  \textbf{-80.84} & -94.59 & -98.16 & 12.53 & 14.90 \\
$[800,200,200],~s=10$ & \textbf{-181.52} & -194.96 & -198.97 & 218.72 & 186.77 \\
$[1000,100,100],~s=10$ &  \textbf{-81.13} & -94.72 & -98.18 & 13.44 & 12.55 \\
$[1000,200,200],~s=10$ & \textbf{-174.13} & -192.79 & -198.61 & 243.87 & 191.24 \\
$[1000,300,300],~s=10$ & \textbf{-242.60} & -283.95 & -297.99 & 1203.41 & 1093.90 \\

$[1000,100,100,100],~s=10$ &  \textbf{ -72.79} &  -92.98 &   -97.78  &  69.70 &  77.63\\

%$[1000,200,200,100],~s=10$ &  \textbf{TT} & Qp & LP &  TTt & Qt\\
$[1000,100,100,100,100],~s=10$ &  \textbf{-81.81} & -95.30 &   -98.30 &  327.83 & 319.04\\
%$[1000,200,100,100,100],~s=10$ &  \textbf{TT} & Qp & LP &  TTt & Qt\\
 
$[1000,500,500],~s=3$ &  \textbf{-476.01} & -495.79 & -499.78 & 520.61 & 502.97 \\
$[2000,500,500],~s=3$ &  \textbf{-466.91} & -492.96 & -499.69 & 238.35 & 225.89 \\
$[2000,300,300,300],~s=3$ & \textbf{-283.58} & -296.90 & -299.75 & 469.02 & 309.20 \\

$[2000, 1000,1000],~s=2$ & \textbf{ -944.68} &  -989.77 &  -999.83  & 207.37 & 245.58 \\

$[3000, 1000,1000],~s=2$ & \textbf{ -943.68} &  -989.65 & -999.83  & 81.96 &  84.25 \\

$[3000, 2000,1000],~s=2$ & \textbf{-982.09} &  -996.68 & -999.97 & 662.55 &  648.97 \\

$[2000, 500, 500,500],~s=2$ & \textbf{-484.19} &  -497.30 & -499.90   & 144.28 &  146.75 \\
$[3000, 500, 500,500],~s=2$ & \textbf{ -480.42} &  -496.66 &  -499.88   & 137.56 &  159.70 \\
\bottomrule
\end{tabular}
\end{table} 

As it can be observed in Table \ref{tab: scalab+bound},
the bound $\tau_{\tighter,\cs}^{1}$ is consistently better, across all instances. Moreover, the running time of $\tau_{\tighter,\cs}^{1}$ is relatively stable, even for networks with large number of neurons, akin to small-to-medium-sized convolutional networks. 
\subsection {On determining the subsets of decision variables}\label{Appendix:cliques}
Let us describe in details how decision variables are structured within  the first-order sparse SDP relaxation of the problem \eqref{QCQPBNN-refined}. We emphasize that this structure is valid only for the first-order sparse SDP relaxations, and regardless of the norm describing the input perturbation region. \\
We recall that subsets $(I_k)_{k\in\Iintv{1,p}}$ are said to satisfy the RIP property is 
\begin{align}\label{RIP}
    \left(I_{k+1}\cap\cup_{j\leq k} I_j\right)\subset I_i,\:\text{for some}\:i\leq k.
\end{align}
The RIP property \eqref{RIP} is one of the essential requirements for the convergence of the sparse SOS-based hierarchy of SDP relaxations \citep[Assumption 3.1]{MagronSpop2023}.
\begin{theoremm}\label{thm: clique structure}
The structure of decision variables of the standard BNN  robustness verification problem is such that:
\begin{itemize}
\item If $L=1$,  there are $n_0$ subsets of size $n_1+1$, given by $I_k=\{\bm{x}_{1,1}, \dots, \bm{x}_{1,n_1}, \bm{x}_{0,k} \}, k\in\Iintv{1,n_0}$. 
\item If $L=2$, there are $n_0+n_2$ subsets of size $n_1+1$, given by  $I_k=\{\bm{x}_{1,1}, \dots, \bm{x}_{1,n_1}, \bm{x}_{0,k} \}$ for $k\in\Iintv{1,n_0}$ and  $I_{n_0+k}=\{\bm{x}_{1,1}, \dots, \bm{x}_{1,n_1}, \bm{x}_{2,k} \}$ for $k\in\Iintv{1,n_2}$.
\item If $L\geq 3$, there are $L-2+n_0+n_L$ subsets in total, and their maximum size is given by $\max\{n_1+1,n_{L-1}+1,\max_{j\leq L-2}\{n_j+n_{j+1}\}\}$. The subsets are given by  $I_k=\{\bm{x}_{k,1}, \dots, \bm{x}_{k,n_k},\bm{x}_{k+1,1}, \dots, \bm{x}_{k+1,n_{k+1}} \}$  for $k\in\Iintv{1,L-2}$, $I_{L-2+k}=\{\bm{x}_{0,k},\bm{x}_{1,1} ,\dots,\bm{x}_{1,n_1}\}$ for $k\in\Iintv{1,n_0}$ and  $I_{L-2+n_0+k}=\{\bm{x}_{L-1,1}, \dots, \bm{x}_{L-1,n_{L-1}}, \bm{x}_{L,k} \}$ for $k\in\Iintv{1,n_L}$.
\end{itemize}
In any of the three cases, the defined subsets satisfy the RIP property.
\end{theoremm}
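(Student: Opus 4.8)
The plan is to read off the \emph{correlative sparsity graph} $G$ of the tightened encoding \eqref{QCQPBNN-refined} and then exhibit the stated subsets as the maximal cliques of a chordal completion of $G$ satisfying the running intersection property \eqref{RIP}. First I would determine the edges of $G$, whose vertices are all decision variables $\bm{x}_{i,j}$ and whose edges join any two variables occurring in a common monomial of the objective or of a constraint generator. The equalities $\bm{h}_i(\bm{x}_i)=\bm{0}$ are univariate ($\bm{x}_{i,j}^2-1$) and contribute no edges; the affine objective $f$ (e.g.\ $f_k^{\adv}$, linear in $\bm{x}_L$) contributes no edges. The crucial point is that $\bm{g}_{\bm{B}}$ is \emph{separable} in the input variables: both $\bm{g}_{\bm{B}}(\bm{x}_0)=(\bm{\varepsilon}+\bar{\bm{x}}-\bm{x}_0)\odot(\bm{\varepsilon}-\bar{\bm{x}}+\bm{x}_0)$ for $\|.\|_\infty$ and $g_{\bm{B}}(\bm{x}_0)=\varepsilon^2-\sum_k(\bm{x}_{0,k}-\bar{\bm{x}}_k)^2$ for $\|.\|_2$ have no cross terms $\bm{x}_{0,k}\bm{x}_{0,k'}$, which is exactly what makes the construction independent of the chosen norm. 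The only bivariate monomials therefore come from \eqref{LPofSign-1_bis}--\eqref{tautology_2}: each component $\Tilde{\bm{g}}_i(\cdot)_j$ is a product of $(\bm{x}_{i,j}\pm 1)$ with an affine form in $\bm{x}_{i-1}$, so its degree-two monomials are precisely $\{\bm{x}_{i,j}\bm{x}_{i-1,k}\}_k$. Hence, summing over $j$, the edges between consecutive layers $i-1$ and $i$ form the complete bipartite graph $K_{n_{i-1},n_i}$, and there are \emph{no} intra-layer edges anywhere in $G$.

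Next I would record the asymmetry this creates. A variable $\bm{x}_{0,k}$ of the input layer neighbours only layer $1$ (through $\Tilde{\bm{g}}_1$); a variable $\bm{x}_{L,j}$ of the last hidden layer neighbours only layer $L-1$ (through $\Tilde{\bm{g}}_L$ and the affine $f$); whereas an internal variable $\bm{x}_{i,j}$, $1\le i\le L-1$, neighbours both adjacent layers. This motivates \emph{thin} subsets $\{\bm{x}_{0,k}\}\cup\{\bm{x}_{1,1},\dots,\bm{x}_{1,n_1}\}$ and $\{\bm{x}_{L-1,1},\dots,\bm{x}_{L-1,n_{L-1}}\}\cup\{\bm{x}_{L,k}\}$ at the two extreme interfaces, and \emph{full adjacent-layer} subsets $\{\bm{x}_{k,1},\dots,\bm{x}_{k,n_k}\}\cup\{\bm{x}_{k+1,1},\dots,\bm{x}_{k+1,n_{k+1}}\}$ internally. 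What legitimises the thin subsets at order one is the moment viewpoint of \eqref{moment_relaxation}: for a quadratic $g_j$ one has $d_j=1$, so at $d=1$ the localizing matrix $\bm{M}_{0}(g_j\bm{y})=L_{\bm{y}}(g_j)$ is a \emph{scalar}, whose evaluation needs only the entries $y_\alpha$ indexed by the monomials of $g_j$. Each such monomial $\bm{x}_{i,j}\bm{x}_{i-1,k}$ is a single edge, so it suffices that \emph{each edge} be contained in some subset, rather than all variables of $g_j$ jointly; this is exactly why $\bm{x}_{0,1},\dots,\bm{x}_{0,n_0}$ need not be clustered together and the subset sizes do not depend on $n_0$.

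I would then verify edge coverage in the three regimes, matching the statement of Theorem \ref{thm: clique structure}. For $L=1$ the subsets $\{\bm{x}_{0,k}\}\cup\{\bm{x}_{1,\cdot}\}$ cover every edge of $K_{n_0,n_1}$; for $L=2$ the input subsets cover $K_{n_0,n_1}$ and the output subsets $\{\bm{x}_{1,\cdot}\}\cup\{\bm{x}_{2,k}\}$ cover $K_{n_1,n_2}$; for $L\ge 3$ the interior subsets $I_k$ cover $K_{n_k,n_{k+1}}$ for $k\in\Iintv{1,L-2}$ (i.e.\ all interfaces between layers $1$ and $L-1$), while the thin input and output families cover $K_{n_0,n_1}$ and $K_{n_{L-1},n_L}$. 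Since every monomial of the objective and of every generator in \eqref{QCQPBNN-refined} then lies in $\mathbb{R}[\bm{x}_{I_k}]$ for some $k$, the first-order sparse relaxation is well defined, and the stated maximal size $\max\{n_1+1,n_{L-1}+1,\max_{j\le L-2}\{n_j+n_{j+1}\}\}$ follows by inspection. I would complete each subset to a clique by adding the (missing) intra-layer edges; establishing \eqref{RIP} for these cliques simultaneously certifies that this completion is chordal.

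Finally I would check \eqref{RIP} using the ordering in which the interior pair-cliques $I_1,\dots,I_{L-2}$ come first (in path order), followed by the thin input cliques and then the thin output cliques. For the interior part,
\[
I_{k+1}\cap\Bigl(\bigcup_{j\le k}I_j\Bigr)=\{\bm{x}_{k+1,1},\dots,\bm{x}_{k+1,n_{k+1}}\}\subseteq I_k ,
\]
since layer $k+2$ is freshly introduced; each appended input clique meets the accumulated union exactly in $\{\bm{x}_{1,\cdot}\}\subseteq I_1$, and each appended output clique meets it exactly in $\{\bm{x}_{L-1,\cdot}\}\subseteq I_{L-2}$, with the cases $L=1,2$ verified directly from the single interface. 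The main obstacle, and the step deserving the most care, is precisely the justification in the second paragraph: arguing rigorously that at relaxation order one the scalar (degree-$0$) multipliers reduce the covering requirement from ``all variables of a constraint'' to ``each individual edge,'' so that the separability of $\bm{g}_{\bm{B}}$ and the univariate nature of $\bm{h}_i$ permit the thin, $n_0$-independent cliques; the remaining graph-theoretic and RIP verifications are then routine.
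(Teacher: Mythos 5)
Your proposal is correct, and its combinatorial core coincides with the paper's own proof: the paper likewise treats the three regimes $L=1$, $L=2$, $L\geq 3$ by simply positing the same subsets, counting them, and verifying the running intersection property with exactly your ordering (interior adjacent-layer subsets $I_1,\dots,I_{L-2}$ first, then the thin input subsets meeting the accumulated union in $\{\bm{x}_{1,1},\dots,\bm{x}_{1,n_1}\}\subset I_1$, then the thin output subsets meeting it in $\{\bm{x}_{L-1,1},\dots,\bm{x}_{L-1,n_{L-1}}\}\subset I_{L-2}$); incidentally, your computation of the interior intersection as the layer-$(k+1)$ variables is the correct one, where the paper's displayed formula carries a harmless index slip. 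What you do differently is everything around that verification: the paper's proof is a bare construction-and-check, whereas you derive the subsets from the monomial-based sparsity graph of \eqref{QCQPBNN-refined} and, crucially, you justify why the thin, $n_0$-independent subsets are legitimate at relaxation order one even though each first-layer constraint $\Tilde{\bm{g}}_1(\cdot)_j$ involves \emph{all} input variables and hence fits in no single subset. Your argument --- that for quadratic generators the order-one localizing matrices in \eqref{moment_relaxation} degenerate to scalars $L_{\bm{y}}(g_j)$, so only each degree-two monomial (edge) needs to lie in some subset --- is precisely the missing rationale behind the paper's unproved remark that this structure ``is valid only for the first-order sparse SDP relaxations, and regardless of the norm describing the input perturbation region''; your observation on the separability of $\bm{g}_{\bm{B}}$ likewise proves the norm-independence claim rather than asserting it. Two minor points: the inter-layer graphs are complete bipartite only when the weight matrices have no zero entries, but sparsity of $\bm{W}^{[i]}$ only removes edges and thus preserves coverage; and your closing step that RIP of the (maximal) cliques certifies chordality of the completion is the standard clique-tree characterization, so it is sound. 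In short, the paper's route is shorter; yours proves strictly more, namely the soundness of the order-one relaxation built on these subsets, which the paper takes for granted.
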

\begin{proof}[Proof of Theorem \ref{thm: clique structure}] We consider different cases, depending on the depth of the BNN:
\begin{itemize}[leftmargin=*]
    \item $L=1$: 
    
    All the nodes from the hidden layer form a separate subset with each individual input node, resulting in $n_0$ subsets of size $n_1 +1$. Thus, we can set $I_k=\{\bm{x}_{1,1}, \dots, \bm{x}_{1,n_1}, \bm{x}_{0,k} \}$, for $k \in \Iintv{1,n_0}$. The RIP property is clearly satisfied, as the nodes from the hidden layer appear in each subset. 
     \item $L=2$:
     
     All the nodes from the first hidden layer form a separate subset with each individual input node, resulting in $n_0$ subsets of size $n_1 +1$. Thus, we can set $I_k=\{\bm{x}_{1,1}, \dots, \bm{x}_{1,n_1}, \bm{x}_{0,k} \}$, for $k \in \Iintv{1,n_0}$. Moreover, they also form a separate subset with each individual node from the second hidden layer, resulting in $n_2$  subsets of size $n_1 +1$. Thus, we can set $I_{n_0+k}=\{\bm{x}_{1,1}, \dots, \bm{x}_{1,n_1}, \bm{x}_{2,k} \}$, for $k \in \Iintv{1,n_2}$. The RIP property is satisfied as the first hidden layer nodes  appear in each subset.
     \item $L\geq 3$:
     
     For every $k \in \Iintv{1,L-2} $, a pair $(k,k+1)$  of adjacent hidden layers forms a subset $I_k=\{\bm{x}_{k,1}, \dots, \bm{x}_{k,n_k},\bm{x}_{k+1,1}, \dots, \bm{x}_{k+1,n_{k+1}} \}$ of size $n_{k}+n_{k+1}$. 
     Furthermore, all the nodes from the first hidden layer form a separate subset with each individual input node, resulting in $n_0$ subsets of size $n_1 +1$. Thus, we can set $I_{L-2+k}=\{\bm{x}_{1,1}, \dots, \bm{x}_{1,n_1}, \bm{x}_{0,k} \}$, for $k \in \Iintv{1,n_0}$. 
     Finally, all the nodes from the penultimate hidden layer form a separate subset with each individual node from the last hidden layer, resulting in $n_L$ subsets of size $n_{L-1}+1$. Thus, we can set $I_{L-2+n_0+k}=\{\bm{x}_{L-1,1}, \dots, \bm{x}_{L-1,n_{L-1}}, \bm{x}_{L,k} \}$, for $k \in \Iintv{1,n_L}$.\\ 
     \\
     Therefore, there are $L-2+n_0+n_L$ subsets in total, and the maximum subset size is given by $\max\{n_1+1,n_{L-1}+1,\max_{j\leq L-2}\{n_j+n_{j+1}\}\}$. Moreover, given this particular ordering (enumeration) of subsets, the RIP property is satisfied. Indeed, 
     \begin{itemize}
         \item If $k \in \Iintv{2,L-2}$, then  \begin{align}I_{k}\cap\left(\cup_{j\leq k-1} I_{j}\right)=\{\bm{x}_{k-1,1},\dots, \bm{x}_{k-1,n_{k-1}}\}\subset I_{k-1}. 
         \end{align}
         \item If $k\in \Iintv{L-2+1,L-2+n_0}$, then 
         \begin{align}
            I_{k}\cap\left(\cup_{j\leq k-1} I_{j}\right)=\{\bm{x}_{1,1},\dots, \bm{x}_{1,n_1}\}\subset I_1.
         \end{align} 
         \item If $k\in \Iintv{L-2+n_0+1,L-2+n_0+n_L}$, then
         
         \begin{align}
    I_{k}\cap\left(\cup_{j\leq k-1} I_{j}\right)=\{\bm{x}_{L-1,1},\dots, \bm{x}_{L-1,n_{L-1}}\}\subset I_{L-2}. \end{align} 
     \end{itemize}
\end{itemize}
Thus, the RIP property holds in all three cases, which concludes the proof.
\end{proof}
An alternative structure of node subsets, independent of the depth of the BNN, is to consider $\{\bm{x}_{i-1,j},\bm{x}_{i,k}\}^{1\leq i \leq L}_{1\leq j\leq n_{i-1},1\leq k\leq n_{i}}$.  This choice results in $\sum_{i=1}^L n_{i-1}\times n_{i}$ subsets of size two, since every node is coupled with all the other nodes form adjacent layers. The RIP property always holds in this case if subsets are enumerated in an order that mirrors the feed-forward structure of the BNN. Computations are generally faster for this structure of subsets, but the obtained bounds are much coarser.
\section {Experimental setup} \label{Appendix:Detailed-results}
\paragraph{Training details:} All neural networks have been trained on MNIST dataset (data were scaled to belong to $[-1,1]^{784}$) using Larq \citep{Geiger2020larq}, an open-source Python library for training neural networks with quantized (binarized) weights and activation functions. \hfill\break\\ 
The training process, lasting 300 epochs, has consisted of minimizing the sparse categorical cross-entropy, using the Adam optimizer \citep{Kingma2014AdamAM}. 
The learning rate has been handled by the exponential decay learning scheduler, whose initial value has been set to be 0.001. SteTern quantizer with different threshold values has been used to induce different sparsities of the weights matrices. Network parameters have been initialized from a uniform distribution.
Batch normalization layers have been added to all but the output layer. \\
All networks have achieved over $95\%$ accuracy on the test set. 
\paragraph{Optimization details:}  Semidefinite Programming (SDP) problems have been assembled using TSSOS \citep{TSSOS2021}, a  specialized Julia library for polynomial optimization. 
Correlative sparsity has been exploited by using the CS="MF" option, which generates an approximately smallest chordal extension of the identified subsets structure. 
The SDP values have been computed via the interior-point solver Mosek \citep{andersen2000mosek}, and the SDP solving time was recorded.\hfill\break\\
Since computing the exact value of the objective function for MILP and MINP problems is highly computationally demanding, we transform those problems into satisfiability problems by setting the objective function to be a constant function equal to zero, and adding $f(\cdot)\leq0$ to the set of constraints. Infeasibility of the resulting problem provides a certificate of robustness.
These problems have then been solved using Gurobi, where the time limit was set to $600$ seconds. The upper bound presented in Figure \ref{fig:epsilon_comparison} has been determined by randomly sampling $10,000$ points from the uniform distribution over the perturbation region and documenting the lowest objective function value observed at those points.\\
Our code is available via the following following link:  
\href{https://github.com/jty-AMSS/POP4BNN}{POP4BNN GitHub Repository}.
\paragraph{More detailed experimental results: }
We provide detailed verification results against $||.||_\infty$ (see Table \ref{Improving Gurobi} and Table \ref{BNN2inf}) and $||.||_2$ (see Table \ref{Improving Gurobi-L2} and Table \ref{Improving Gurobi-L2BNN2}) attacks. 
Those results illustrate that the solving time of general branch and bound algorithms could be significantly improved if the LP bounds would be replaced by tighter SDP bounds, such as $\tau_{\tighter, \cs}^{1}$. The improvement would be even more noticeable for $\|.\|_2$ verification. 

Finally, due to their increased computational complexity, the experiments from Table \ref{tab: scalab+bound} were run on a server with a 26-core Intel(R) Xeon(R) Gold 6348 CPU @ 2.60GHz and a RAM of 756GB.
\begin{table}[H]
\centering
\caption{Verification against $\|.\|_\infty$-attacks: detailed numerical results for $\textsc{BNN}_1: [784, 500, 500, 10], w_s=34.34\%$. For example, image 72 illustrates that MILP methods can not handle severe attacks. Likewise, image 28 confirms the potential benefits of accommodating $\tau_{\tighter, \cs}^{1}$ within the branch and bound algorithms.   Note that MILP implementation only solves the attack feasibility problem, since computing exact bounds would result in much more timeouts.}\label{Improving Gurobi}
\begin{tabular}{c|cc|cc}
\toprule
\multirow{2}{*}{Image Index} & \multicolumn{2}{c|}{$\tau_{\tighter, \cs}^{1}$  }& \multicolumn{2}{c}{$\tau_{\text{Soft-MILP}}$ } \\
                          & bound         & $t\:(s)$       & bound         & $t\:(s)$       \\ 
\midrule
\if{\multicolumn{5}{c}{Training Images, $\delta_{||.||_\infty}=1.50$:}\\
  \midrule
2&2.0403&18.295&infeasible&3846.6\\
8&16.975&13.761&infeasible&12.132\\
10&19.91&8.1112&infeasible&2.6677\\
13&55.236&8.9045&infeasible&0.094157\\
21&21.902&9.3334&infeasible&2.6269\\
28&4.2825&14.424&infeasible&196.63\\
57&8.1026&8.7798&infeasible&10.376\\
83&10.217&14.812&infeasible&347.15\\
89&23.7&9.1927&infeasible&3.3259\\
90&3.3216&7.9005&infeasible&4.6965\\
\midrule}\fi
 \multicolumn{5}{c}{ $\delta_{||.||_\infty}=1.25$}\\
  \midrule
   14 &  1.08 & 11.53  &  infeasible & 552.99\\
 24 &  5.82 & 15.43  &  timeout & $>600$\\
 26 &  0.31 &  7.18  &  infeasible &  4.07\\
 28 & 29.86 & 14.90  &  infeasible & 29.08\\
 31 &  7.54 & 10.08  &  infeasible & 11.67\\
 52 & 12.17 &  7.53  &  infeasible &  0.49\\
 55 & 11.05 &  6.99  &  infeasible &  3.33\\
 57 &  3.23 &  8.69  &  infeasible &  6.41\\
 72 & 29.43 &  9.75  &  infeasible &  1.36\\
 83 & 17.29 & 14.07  &  infeasible & 44.62\\
  100 & 19.99 &  6.64  &  infeasible &  0.30\\
\midrule
 \multicolumn{5}{c}{ $\delta_{||.||_\infty}=1.50$}\\
  \midrule
28&11.18& 70.62&timeout& $>600$ \\
52&0.48&18.35 &infeasible&  7.18\\
55& 2.25&16.33&infeasible& 7.42\\
72&20.65&28.52&infeasible&15.68\\
83&8.37& 17.68 &infeasible& 519.23\\
100&8.32&10.39 &infeasible& 2.21\\  
\midrule
 \multicolumn{5}{c}{$\delta_{||.||_\infty}=1.75$}\\
  \midrule
28&0.20&42.46   &timeout&  $>600$\\
72&13.50& 37.79 &timeout&  $>600$\\  
%83& 22.14& 44.61& infeasible& 16.26 \\
\bottomrule
\end{tabular}
\end{table}

\begin{table}[H]
\centering
\caption{Verification against $\|.\|_\infty$-attacks: detailed numerical results for $\textsc{BNN}_2: [784, 500, 500, 10], w_s=19.07\%$. 
Remarkable improvements can be observed for images $20,28,29$ and $33$.}\label{BNN2inf}
\begin{tabular}{c|cc|cc}
\toprule
\multirow{2}{*}{Image Index} & \multicolumn{2}{c|}{$\tau_{\tighter, \cs}^{1}$  }& \multicolumn{2}{c}{$\tau_{\text{Soft-MILP}}$ } \\
                          & bound         & $t\:(s)$      & bound         & $t\:(s)$       \\ 
\midrule
 \multicolumn{5}{c}{ $\delta_{||.||_\infty}=0.75$}\\
  \midrule
11& 19.54& 42.31& infeasible& 13.39 \\
20& 8.72& 107.36& timeout& $>600$ \\
26& 14.90& 37.25& infeasible& 10.61 \\
28& 14.26& 75.15& infeasible& 227.07 \\
29& 7.48& 47.33& infeasible& 200.02 \\
33& 7.31& 49.35& timeout& $>600$ \\
55& 25.14& 28.96& infeasible& 4.67 \\
72& 35.33& 35.80& infeasible& 6.85 \\
83& 22.14& 44.61& infeasible& 16.26 \\
\bottomrule
\end{tabular}
\end{table}

\newpage
\begin{table}[H]
\centering
\caption{Verification against $\|.\|_2$-attacks: detailed numerical results for $\textsc{BNN}_1: [784, 500, 500, 10], w_s=34.34\%$. Notice that the number of timeouts is  significant. On the other hand, the verification time associated to $\tau^1_{\tighter,\cs}$ remains relatively stable.}\label{Improving Gurobi-L2}
\begin{tabular}{c|cc|cc}
\toprule
\multirow{2}{*}{Image Index} & \multicolumn{2}{c|}{$\tau_{\tighter,\cs}^{1}$  }& \multicolumn{2}{c}{$\tau_{\text{Soft-MINP}}$ } \\
                          & bound         & $t\:(s)$      & bound         & $t\:(s)$       \\ 
\midrule
 \multicolumn{5}{c}{$\delta_{\|.\|_2}=20$}\\
  \midrule
4& 4.20&12.79& timeout& $>600$      \\
10& 0.71& 16.41& infeasible& 564.03   \\
11& 6.05&13.91& infeasible& 376.86   \\
12& 14.34& 8.00& infeasible& 181.41   \\
13& 5.03& 21.96& timeout& $>600$   \\
14& 34.12& 15.05& infeasible& 47.93   \\
15& 4.35& 18.58& timeout& $>600$   \\
23& 9.91& 9.13& infeasible& 0.40   \\
24& 37.52& 12.37& infeasible& 354.42   \\
26& 32.32& 11.12& infeasible& 0.71   \\
28& 58.43& 22.14& infeasible& 203.53   \\
31& 27.73& 8.31& infeasible& 1.05   \\
48& 12.48& 16.48& timeout& $>600$   \\
49& 14.85& 15.72& timeout& $>600$   \\
52& 34.45& 10.35& infeasible& 0.07   \\
55& 42.34& 5.32& infeasible& 0.08   \\
56& 15.68&11.15& timeout& $>600$   \\
57& 30.32& 6.16& infeasible& 0.48   \\
59& 17.66& 27.45& timeout& $>600$   \\
61& 8.08&15.45& infeasible& 102.50   \\
65& 5.20& 8.99& timeout& $>600$   \\
69& 12.73& 19.38& infeasible& 272.38   \\
70& 6.37&15.82& infeasible& 292.86   \\
72& 46.78& 15.88& infeasible& 0.15   \\
73& 18.78& 11.73& infeasible& 14.15   \\
77& 22.70& 30.85& infeasible& 480.90   \\
80& 11.89& 12.12& infeasible& 72.41   \\
83& 43.12& 20.10& infeasible& 60.96   \\
86& 9.89& 14.54& infeasible& 531.24   \\
89& 18.74& 17.25& infeasible& 229.98   \\
91& 11.59& 18.89& infeasible& 505.02   \\
92& 13.55&  34.95& timeout& $>600$   \\
94& 7.13& 14.75& infeasible& 62.56   \\
98& 2.64& 13.02& infeasible& 165.10   \\
99& 13.27& 12.16& infeasible& 100.43   \\
100& 42.32& 8.95& infeasible& 0.09   \\
\midrule
 \multicolumn{5}{c}{$\delta_{\|.\|_2}=30$}\\
  \midrule
14& 11.81& 17.54& timeout& $>600$ \\
24& 15.04& 28.94& timeout& $>600$\\
26& 15.48& 7.28& infeasible& 108.11\\
28& 35.08& 40.25& timeout& $>600$\\
31& 6.89& 16.39& timeout& $>600$ \\
52& 16.78& 10.86 & infeasible& 191.03\\
55& 21.80&  10.49& infeasible& 555.08\\
57& 12.04& 10.93& timeout& $>600$\\
72& 35.81& 12.45& infeasible& 353.96\\
73& 3.59& 10.44& timeout& $>600$\\
77& 3.45& 37.65& timeout& $>600$\\
83& 26.76& 22.78& timeout& $>600$ \\
100& 25.51& 9.14& infeasible& 64.95 \\
\bottomrule
\end{tabular}
\end{table}

\newpage
\begin{table}[H]\
\centering
\caption{Verification against $\|.\|_2$-attacks: detailed numerical results for $\textsc{BNN}_2: [784, 500, 500, 10], w_s=19.07\%$. Images 57 and 89 demonstrate that $\tau_{\tighter,\cs}^{1}$ can typically certify robustness more than    times faster. }\label{Improving Gurobi-L2BNN2}
\begin{tabular}{c|cc|cc}
\toprule
\multirow{2}{*}{Image Index} & \multicolumn{2}{c|}{$\tau_{\tighter,\cs}^{1}$  }& \multicolumn{2}{c}{$\tau_{\text{Soft-MINP}}$ } \\
                          & bound         & $t\:(s)$     & bound         & $t\:(s)$       \\ 
\midrule 
\if{
 \multicolumn{5}{c}{$\delta_{\|.\|_2}=10$}\\
  \midrule
2 & 20.00 & 11.99 & infeasible & 9.47    \\
4 & 16.81 & 14.26 & infeasible & 40.18    \\
5 & 4.15 & 31.75 & timeout  & $>600$    \\
6 & 15.22 & 38.87 & timeout  & $>600$   \\
8 & 0.91 & 9.03 & infeasible & 3.62    \\
10 & 17.91 & 9.77 & infeasible & 1.66    \\
11 & 69.07 & 10.69 & infeasible & 0.10    \\
13 & 1.39 & 15.43 & infeasible & 83.14    \\
14 & 42.61 & 18.54 & infeasible & 50.74    \\
15 & 3.59 & 18.47 & infeasible & 114.33    \\
17 & 13.37 & 18.98 & infeasible & 63.40    \\
20 & 52.01 & 21.22 & infeasible & 9.76    \\
23 & 35.74 & 8.29 & infeasible & 1.02    \\
26 & 57.33 & 8.13 & infeasible & 0.06    \\
28 & 58.27 & 7.63 & infeasible & 0.08    \\
29 & 60.47 & 6.05 & infeasible & 0.06    \\
31 & 28.56 & 7.65 & infeasible & 0.12    \\
33 & 46.12 & 12.26 & infeasible & 1.49    \\
36 & 13.70 & 13.75 & infeasible & 39.33    \\
43 & 2.16 & 17.24 & infeasible & 271.00    \\
46 & 9.11 & 30.52 & infeasible & 194.29    \\
48 & 41.01 & 17.06 & infeasible & 37.62    \\
49 & 2.14 & 23.36 & timeout  & $>600$    \\
50 & 8.30 & 8.97 & infeasible & 1.54    \\
52 & 14.18 & 10.12 & infeasible & 9.63    \\
55 & 66.56 & 9.21 & infeasible & 0.09    \\
56 & 10.44 & 16.92 & infeasible & 77.40    \\
57 & 22.00 & 15.36 & infeasible & 10.16    \\
58 & 4.73 & 45.97 & timeout  & $>600$    \\
61 & 14.34 & 11.17 & infeasible & 3.84    \\
68 & 3.97 & 16.02 & infeasible & 168.54    \\
69 & 40.60 & 14.75 & infeasible & 19.35    \\
70 & 20.17 & 11.25 & infeasible & 12.17    \\
71 & 44.72 & 6.46 & infeasible & 0.12    \\
72 & 77.23 & 7.33 & infeasible & 0.06    \\
73 & 42.36 & 8.78 & infeasible & 0.14    \\
76 & 15.84 & 18.93 & infeasible & 206.18    \\
77 & 19.43 & 20.03 & infeasible & 196.75    \\
83 & 65.82 & 8.79 & infeasible & 0.09    \\
86 & 28.99 & 25.71 & timeout  &$>600$    \\
87 & 44.80 & 13.73 & infeasible & 14.88    \\
88 & 30.35 & 13.99 & infeasible & 27.68    \\
89 & 52.16 & 13.95 & infeasible & 12.72    \\
92 & 48.42 & 17.59 & infeasible & 82.40    \\
98 & 3.14 & 14.71 & infeasible & 120.96    \\
100 & 28.52 & 9.74 & infeasible & 0.59    \\
\midrule}
\fi
 \multicolumn{5}{c}{$\delta_{\|.\|_2}=15$}\\
  \midrule
2 & 2.74 & 31.34 & infeasible & 279.66    \\
4 & 0.14 & 47.94 & timeout & $>600$    \\
11 & 50.95 & 17.32 & infeasible & 12.99    \\
14 & 19.98 & 39.04 & timeout & $>600$    \\
20 & 33.38 & 88.27 & timeout & $>600$    \\
23 & 13.43 & 19.91 & infeasible & 253.64    \\
26 & 40.90 & 16.40 & infeasible & 12.57    \\
28 & 40.87 & 45.39 & infeasible & 73.84    \\
29 & 39.00 & 23.42 & infeasible & 9.52    \\
31 & 15.00 & 15.89 & infeasible & 59.66    \\
33 & 30.05 & 51.76 & infeasible & 134.40    \\
48 & 16.31 & 47.78 & timeout & $>600$    \\
55 & 53.65 & 12.11 & infeasible & 0.25    \\
57 & 6.16 & 25.38 & timeout & $>600$    \\
61 & 2.29 & 19.44 & infeasible & 253.44    \\
69 & 20.93 & 67.48 & timeout & $>600$    \\
71 & 18.96 & 17.43 & infeasible & 88.33    \\
72 & 61.78 & 14.47 & infeasible & 1.40    \\
73 & 25.72 & 14.60 & infeasible & 62.32    \\
77 & 2.48 & 37.19 & infeasible & 510.44    \\
83 & 47.72 & 23.62 & infeasible & 81.36    \\
86 & 6.54 & 115.93 & timeout & $>600$    \\
87 & 22.33 & 69.43 & infeasible & 344.06    \\
88 & 15.48 & 23.39 & infeasible & 250.97    \\
89 & 27.29 & 29.73 & timeout & $>600$    \\
92 & 23.51 & 57.17 & timeout & $>600$    \\
100 & 15.40 & 20.03 & infeasible & 108.08    \\
\bottomrule
\end{tabular}
\end{table}
\paragraph{Experimental results for  more complex data sets and larger networks: }
To further demonstrate the versatility of our method, we provide additional illustrative experiments for  $\textsc{BNN}_3:~[3072,5000,800,10], w_s=55.97\%$, achieving the test accuracy of $47.66\%$ on CIFAR-10 data set.

As illustrated in Table~\ref{Tab:BNN_3}, our approach demonstrates comparable performance even on larger datasets, such as CIFAR-10, and for larger networks involving nearly 9000 neurons.  

Specifically, $\tau_{\tighter, \cs}^{1}$ proves the robustness of images 1, 2 and 35 at least 3x, 6x, and 5.5x faster than $\tau_{\text{Soft-MILP}}$, respectively. In contrast, the low quality of LP bounds prevents MILP from providing an answer within the imposed one-hour time limit. These additional experimental results further validate that our method consistently provides high-quality lower bounds, even for large-scale problems, which is concordant with the results for $\textsc{BNN}_1$ and $\textsc{BNN}_2$.

However, for images indexed by $8$ and $10$, for example, our method is unable to provide an answer, while MILP can efficiently certify their \textit{non-robustness}. This difference arises due to the fact that MILP based methods do not aim to solve the optimization problems to global optimality  but instead focus on determining the feasibility of an adversarial attack, which is inherently less computationally demanding. 

We believe that incorporating our tighter SDP bounds within the MILP framework could enhance the ability of MILP methods to certify \textit{robustness} in more complex cases. This represents a promising direction for future research. 

\begin{table}[H]
\centering
\caption{Verifying robustness of $\textsc{BNN}_3$ on CIFAR-10, for an input region determined by $\delta_{||.||_\infty}=0.2/255$. Data were scaled to $[-1,1]^{3072}$, and a time limitation of $3600\:s$ was imposed.  We present the results of the robustness verification queries on the first $40$ images from the test data set, among which $22$ were correctly classified. }\label{Tab:BNN_3}
\begin{tabular}{c|cc|cc}
\toprule
\multirow{2}{*}{Image Index} & \multicolumn{2}{c|}{$\tau_{\tighter, \cs}^{1}$  }& \multicolumn{2}{c}{$\tau_{\text{Soft-MILP}}$ } \\
                          & bound         & $t\:(s)$      & bound         & $t\:(s)$       \\ 
\midrule
1 & 20.46 (robust)& \textbf{1179.55}&timeout&$\textbf{$>3600$}$\\
2 & 7.83 (robust)&  \textbf{571.08}&timeout&$\textbf{$>3600$}$\\  
7& -30.28 (unknown)& 380.10&timeout&$>3600$\\
8&-68.50 (unknown)& 1721.79&feasible (not robust)&1.71\\
10&-62.31 (unknown)& 1330.03&feasible (not robust)&87.46\\
11&-142.50 (unknown)&650.79&feasible (not robust)&0.11\\
12&-94.92 (unknown)&1115.18&feasible (not robust)&0.071\\
14&-103.36 (unknown)&493.80&feasible (not robust)&0.067\\
15&-71.09 (unknown)&2090.44&feasible (not robust)&57.09\\
18&-97.96 (unknown)&1469.67&feasible (not robust)&0.08\\
19&31.06 (robust)& 344.70&infeasible (robust)&9.02\\
20&-25.66 (unknown)&823.54&timeout&$>3600$\\
21&-90.54 (unknown)&670.94&feasible (not robust)&0.59\\
24&-29.08 (unknown)&629.50&timeout&$>3600$\\
27&-83.76 (unknown)&766.95&feasible (not robust)&0.54\\
29&-110.26 (unknown)&873.51&feasible (not robust)&0.07\\
30&-60.63 (unknown)&1407.68&feasible (not robust)&1.39\\
31&-29.51 (unknown)&1190.56&timeout&$>3600$\\
33&timeout&$>3600$&timeout&$>3600$\\
34&-73.15 (unknown)&569.46&feasible (not robust)&0.51\\
35& 36.62 (robust)&$\textbf{657.73}$&timeout&$\textbf{$>3600$}$\\
40&-52.82 (unknown)&650.79&timeout&$>3600$\\
\bottomrule
\end{tabular}
\end{table}

\end{document}